\newtheorem{theorem}{Theorem}
\newtheorem{proposition}[theorem]{Proposition}
\theoremstyle{definition}
\newtheorem{definition}{Definition}%[section]
\begin{document}

\title{Safety Embedded Differential Dynamic Programming Using Discrete Barrier States}

\author{Hassan Almubarak$^{1, 5}$, ~\IEEEmembership{Student Member,~IEEE,}, Kyle Stachowicz$^{2}$, Nader Sadegh$^{3}$, ~\IEEEmembership{Member,~IEEE,} and Evangelos A. Theodorou$^{4}$
\thanks{Manuscript received: August, 8, 2021; Revised November, 26, 2021; Accepted December, 27, 2021.}%Use only for final RAL version
\thanks{This paper was recommended for publication by Editor H. Kurniawati upon evaluation of the Associate Editor and Reviewers' comments.
This work was supported in part by the National Science Foundation, CPS Grant 1932288.}
\thanks{$^{1}$\scriptsize School of Electrical and Computer Engineering \tt{halmubarak@gatech.edu}}
\thanks{$^{2}$\scriptsize School of Computer Science {\tt kwstach@gatech.edu}}
\thanks{$^{3}$\scriptsize George W. Woodruff School of Mechanical Engineering {\tt sadegh@gatech.edu}}
\thanks{$^{4}$\scriptsize Daniel Guggenheim School of Aerospace Engineering \\ {\tt  evangelos.theodorou@gatech.edu}}
\thanks{\scriptsize Georgia Institute of Technology, Atlanta, GA, USA}
\thanks{$^{5}$\scriptsize Control and Instrumentation Engineering Department, King Fahd University of Petroleum \& Minerals, Dhahran, Saudi Arabia}%   

\thanks{Digital Object Identifier (DOI):  10.1109/LRA.2022.3143301.}

}

%\author{IEEE Publication Technology,~\IEEEmembership{Staff,~IEEE,}
        % <-this % stops a space
%\thanks{This paper was produced by the IEEE Publication Technology Group. They are in Piscataway, NJ.}% <-this % stops a space
%\thanks{Manuscript received April 19, 2021; revised August 16, 2021.}}

% The paper headers
%\markboth{Journal of \LaTeX\ Class Files,~Vol.~14, No.~8, August~2021}%
%{Shell \MakeLowercase{\textit{et al.}}: A Sample Article Using IEEEtran.cls for IEEE Journals}

\markboth{IEEE Robotics and Automation Letters. Preprint Version. Accepted December, 2021}
{Almubarak \MakeLowercase{\textit{et al.}}: Safety Embedded Differential Dynamic Programming} 

% \IEEEpubid{0000--0000/00\$00.00~\copyright~2021 IEEE}
% Remember, if you use this you must call \IEEEpubidadjcol in the second
% column for its text to clear the IEEEpubid mark.

\maketitle

\begin{abstract}
Certified safe control is a growing challenge in robotics, especially when performance and safety objectives must be concurrently achieved. In this work, we extend the barrier state (BaS) concept, recently proposed for safe stabilization of continuous time systems, to safety embedded trajectory optimization for discrete time systems using discrete barrier states (DBaS). The constructed DBaS is embedded into the discrete model of the safety-critical system integrating safety objectives into the system's dynamics and performance objectives. Thereby, the control policy is directly supplied by safety-critical information through the barrier state. This allows us to employ the DBaS with differential dynamic programming (DDP) to plan and execute safe optimal trajectories. The proposed algorithm is leveraged on various safety-critical control and planning problems including a differential wheeled robot safe navigation in randomized and complex environments and on a quadrotor to safely perform reaching and tracking tasks. The DBaS-based DDP (DBaS-DDP) is shown to consistently outperform penalty methods commonly used to approximate constrained DDP problems as well as CBF-based safety filters.
\end{abstract}

\begin{IEEEkeywords}
Optimization and Optimal Control, Robot Safety, Constrained Motion Planning.
\end{IEEEkeywords}
\vspace{-2mm}
\section{Introduction}
%    \item Intro about safety and barrier functions. \\
\IEEEPARstart{S}{afety} in robotics, in its various forms - including collision avoidance, safe collaboration, etc. - is crucial to expanding the applicability of autonomous robots. With increasing demand for autonomy in various industries, this task is increasingly daunting even for known environments. Therefore, there is a clear need for provably safe controls. Yet, the difficulty in unifying safety and performance objectives usually calls for the trade-off between the objectives. To confront such a trade-off, this letter develops a technique to enforce safety in optimization-based controllers for discrete time nonlinear systems that guarantees safety as long as a solution exists. The letter builds on a recently proposed safety integrating technique for stabilization of continuous time systems \cite{Almubarak2021SafetyEC}, which enforces safety through embedding barrier states (BaS) into the model of the dynamical system. We extend the idea to trajectory optimization for discrete time nonlinear systems by developing a novel extension we term discrete barrier states (DBaS).

Safety, which can be verified through set invariance \cite{blanchini1999set}, is an increasingly important property of dynamical systems. The development of barrier certificates \cite{prajna2003barrier,prajna2004safety} formed an early approach to verification. Later, inspired by control Lyapunov functions and barrier certificates, \citet{wieland2007constructive} introduced control barrier functions (CBFs) to propose a feedback method of enforcing safety in continuous time systems. In an attempt to develop safe stabilization, \citet{ames2014control} and \citet{romdlony2014uniting} proposed spiritually similar, albeit distinct, CLF-CBF unification techniques. \citet{ames2014control} pioneered the CLF-CBF quadratic program (QP) paradigm which was further developed in \cite{ames2016CBF-forSaferyCritControl}. The CLF-CBF QP and the developed CBF have attracted researchers attention to be adopted in various control frameworks and robotic applications \cite{agrawal2017discrete,choi2020reinforcement,taylor2020adaptive,wang2018safe}. For discrete time systems, \citet{agrawal2017discrete} extended the notion of continuous time CBFs and CLF-CBF QPs to problems in discrete time. Nonetheless, discrete CBFs, which use reciprocal barrier functions, tend to be more restrictive than their continuous counterparts as the optimization problem may not be quadratic and is non-convex, which limits its applicability \cite{agrawal2017discrete}. Therefore, they proposed discrete time exponential barrier function (DECBFs) that solves the problem, which was generalized by \citet{ahmadi2019safe} and called discrete zeroing CBFs (DZCBFs), in analogy to its continuous time counterpart ZCBFs in \cite{ames2016CBF-forSaferyCritControl}.

\begin{figure} [t]
    \centering
    \includegraphics[trim=300 0 300 0, clip, height=0.4\linewidth]{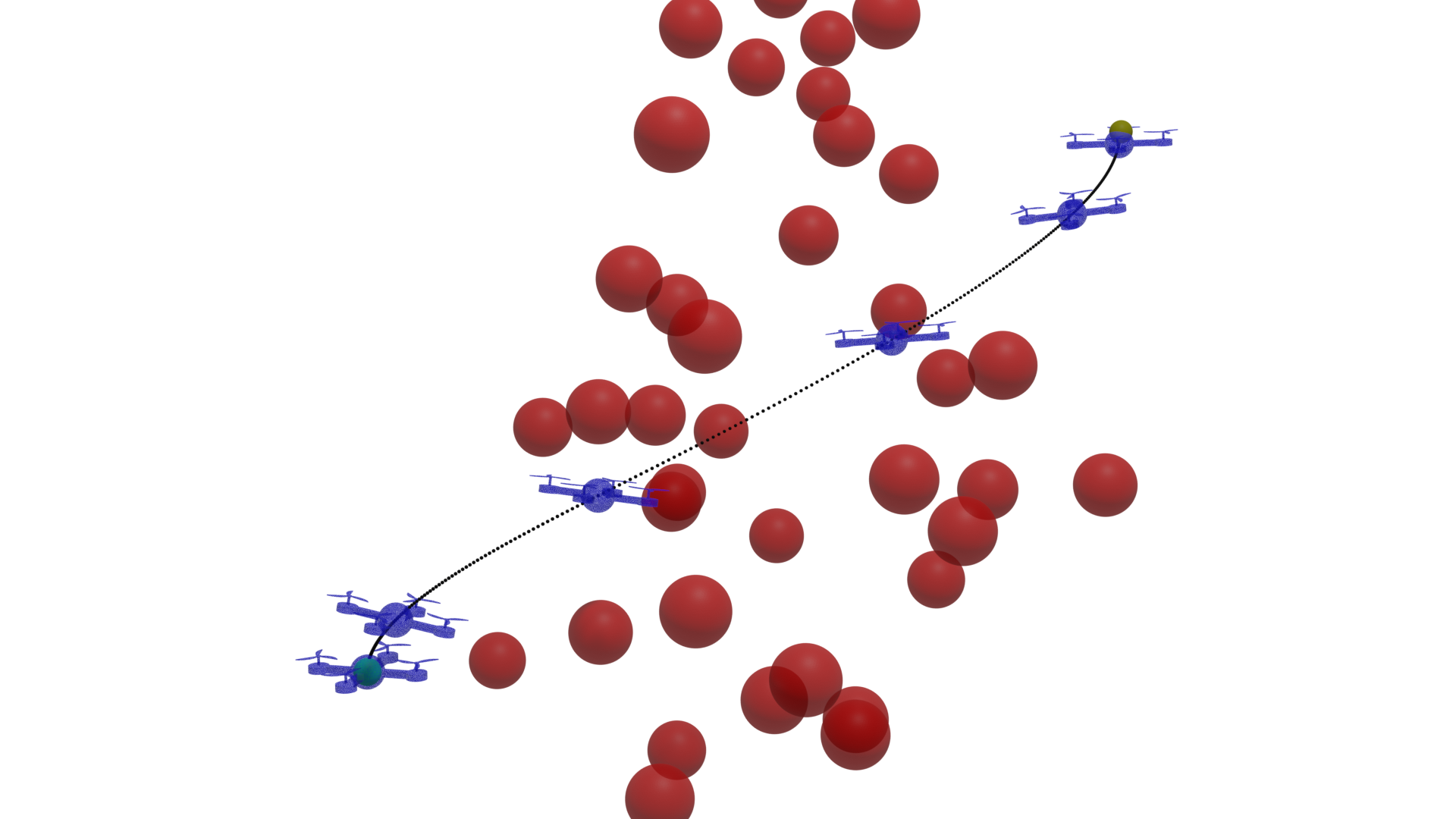}
    \includegraphics[trim=400 0 300 0, clip, height=0.4\linewidth]{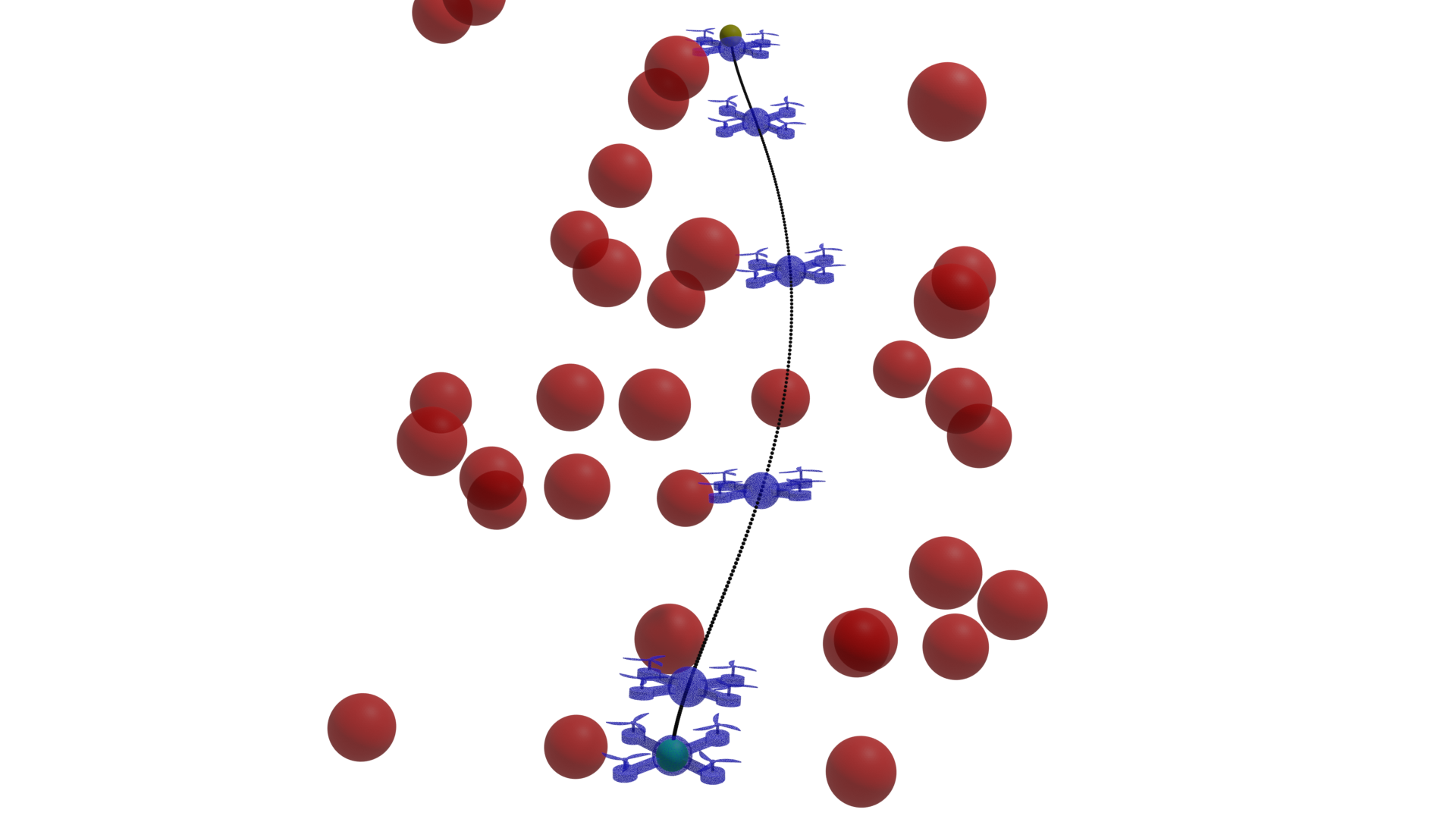}
    \caption{Two views of the quadrotor reaching problem with many spherical obstacles in the space. The proposed DBaS-DDP %quadrotor
    safely performs the reaching task starting from the initial position (green) to the final position (yellow).}
\label{fig:quad_reaching}
\vspace{-5mm}
\end{figure}

To use CBFs in multi-objective controls, one must trade off between safety and performance objectives \cite{ames2016CBF-forSaferyCritControl}. Moreover, in the case of high relative-degree constraints, the problem becomes more challenging. Several methods have been developed to form higher-order CBFs, the most popular of these involving defining a new invariant safe set as the intersection of several invariant sets \cite{nguyen2016exponential,xiao2019control}. However, these approaches tend to unnecessarily restrict the allowable state space and are difficult to implement and tune. To avoid finding a high-order CBF altogether, some practical implementations \cite{marcus2020safe,long2021learning,xiao2020feasibility} instead apply an ad-hoc solution by modifying the CBF or the safety constraint to ensure a relative degree of $1$. However, this is likely to affect performance or safety of the resulting system with respect to the original objective and safety constraint.

Typically, when multiple safety constraints must be satisfied, multiple corresponding CBF inequality constraints are used \cite{marcus2020safe,long2021learning,xiao2020feasibility} and thus further relaxations could be needed. \citet{wang2016multi} proposed compositional barrier functions which can combine multiple CBFs into a single barrier function. However, this technique does not easily generalize, for example to learned or robust CBFs \cite{long2021learning}, and may create a CBF of ill-conditioned or high relative degree in some cases.

Constrained trajectory optimization is a challenging problem that has been revisited repeatedly in the literature. The differential dynamic programming (DDP) method can efficiently find optimal trajectories, but it is not straightforward to implement constraints. Contrarily, direct methods based on general nonlinear solvers can directly incorporate constraints, but at the expense of computational efficiency. \citet{murray1979constrainedDDP} describes an early approach to incorporating constraints into DDP, but only considers control constraints. This was later improved using active-set QP methods \cite{tassa2014control}, but the state-constrained case has remained a difficult open problem. One common approach is an application of the Augmented Lagrangian \cite{plancher2017constrained} \cite{howell2019altro} technique, which iteratively finds Lagrange multipliers for the constraint with first-order convergence. \citet{xie2017ddpNonlinearconstraints} proposed an active-set approach to the constrained DDP problem, which calculates active-set conditions in the backwards pass and solves a QP at each stage of the forwards pass. \citet{aoyama2020constrainedDDP} presented a related algorithm that switches online between an Augmented Lagrangian and an active-set method for faster global convergence. Finally, interior-point methods have been applied in \citet{pavlov2021interiorConstrainedDDP} to achieve local second-order convergence in the presence of nonlinear constraints. However, these algorithms often have difficulty with highly locally-nonlinear constraints and require substantial tuning and good-quality warm-starts to achieve satisfactory results.

\subsection{Contributions and Organization}
In this letter, we state the safety constraint formulation in \autoref{Section: Safety Constraint Formulation}. After that, we develop discrete barrier states (DBaS) to enforce safety for nonlinear discrete time systems in \autoref{Section: Discrete Time Barrier States}. Thereafter, a DBaS is embedded in the system's model forcing the control search to take place in the set of safe controls, which avoids compromising the performance or safety objectives. In addition, we show how to represent multiple constraints using a single DBaS. \autoref{Section: Safety Embedded DDP} states the constrained optimal control problem statement. Subsequently, we leverage the safety embedding technique with differential dynamic programming (DDP) to develop safe trajectory optimization. We show that the generated trajectories are guaranteed to be safe as long as the standard DDP convergence conditions are satisfied. 
We show the generality of our proposed framework in \autoref{Section: Safety Embedded DDP Examples} by applying DBaS-DDP to several systems including collision-avoidance problems for omnidirectional and differential wheeled robots, a cart-pole problem where motion is bounded by a fixed-length rail, and a variety of quadrotor tasks including safe trajectory tracking and reaching as in \autoref{fig:quad_reaching}. We compare DBaS-DDP with the penalty method DDP and with CBF when possible, and demonstrate that it exhibits improved performance and robustness characteristics in multiple extensive randomized experiments. Finally, concluding remarks and future directions are provided in \autoref{Section: Conclusion}.

\section{Safety Constraint Formulation} \label{Section: Safety Constraint Formulation}
Consider the discrete time nonlinear safety critical control system
\begin{equation} \label{discrete control system}
    x(k+1)=f(x(k),u(k))
\end{equation}
where $k \in \mathbb{Z}^+_0$ is the time step, $x(k) \in \mathcal{D} \subset \mathbb{R}^n$, $u(k)\in \mathcal{U} \subset \mathbb{R}^m$ and $f:\mathcal{D} \times \mathcal{U} \rightarrow \mathcal{D}$ is continuous. Throughout the work, we will use the subscript formulation to indicate the time step. %of the state or the control, e.g. the system \eqref{discrete control system} will be written as $x_{k+1}=f(x_k,u_k)$. 
For this system, consider the set $\mathcal{S}$ defined as the superlevel set of a smooth function $h: \mathcal{D} \rightarrow \mathbb{R}$ such that
\begin{equation} \begin{split} \label{safe set S}
    \mathcal{S}:= \{ x_k \in \mathcal{D} \ | \ h(x_k) \geq 0\} \\
    \mathcal{S}^{\circ}:= \{ x_k \in \mathcal{D} \ | \ h(x_k) > 0\} \\
    \partial \mathcal{S}:= \{ x_k \in \mathcal{D} \ | \ h(x_k) = 0\}
\end{split} \end{equation} 
where $ \mathcal{S}^{\circ}$ and $\partial \mathcal{S}$ are the interior and the boundary of the set $\mathcal{S}$, respectively. %Throughout the letter, we refer to $\mathcal{S}^{\circ}$ as the safe set. 
Let $\mathcal{S}^{\circ}$ be the safe set we desire the system's state to stay in. To enforce safety, one needs to satisfy the invariance property given by the following definitions. 
\begin{definition}  \label{invariant def}
The set $\mathcal{S}^{\circ} \subset \mathbb{R}^n$ is said to be forward invariant for the dynamical system $x(k+1)=f(x(k))$ if $\forall x(0) \in \mathcal{S}^{\circ}, x(k) \in \mathcal{S}^{\circ} \ \forall k\in \mathbb{Z}^+$. Equivalently,
\begin{equation} \label{safety condition}
    h(x_k) > 0 \ \forall k \geq 0 ; \ x(0) \in \mathcal{S}^{\circ} 
\end{equation}
We refer to this as the safety condition. 
%$\mathcal{S}^{\circ}$ is said to be invariant if $x(k) \in \mathcal{S}^{\circ} \ \forall k\in \mathbb{Z}$.
\end{definition}
\begin{definition} \label{controlled invariant def}
The set $\mathcal{S}^{\circ} \subset \mathbb{R}^n$ is said to be \textit{controlled invariant} for the system in \eqref{discrete control system} if a continuous feedback controller $u_k=K(x_k)$ exists such that for the closed-loop system $x_{k+1}=f(x_k,K(x_k))$, the set $\mathcal{S}^{\circ}$ is forward invariant. Accordingly, the controller $u_k=K(x_k)$ is said to be safe. 
\end{definition}

To render $\mathcal{S}^{\circ}$ controlled invariant for the discrete control system \eqref{discrete control system}, we define the barrier function $B:\mathcal{S}^{\circ} \rightarrow \mathbb{R}$ %also known as an interior penalty function in the optimization literature 
\cite{ben2020lecturesoptimization}, to be a smooth function on the interior of $\mathcal{S}$ that goes to infinity as $x_k \in \mathcal{S}^{\circ}$ approaches a point of $\partial \mathcal{S}$. Mathematically,
$$
x_k \in \mathcal{S}^{\circ}, \tilde{x}\equiv \lim_{k\rightarrow \infty} x_k \in \partial\mathcal{S} \Rightarrow B(x_k) \rightarrow \infty, k\rightarrow \infty
$$
Examples of favored barrier functions with such properties over the set $\mathcal{S}$ defined by $h(x_k)$ include logarithmic barriers such as $B(h(x_k))=-\log(h(x_k))$ and $B(h(x_k))=-\log\Big(\frac{h(x_k)}{1+h(x_k)}\Big)$ and the Carroll barrier, also called the inverse barrier, $B(h(x_k))=\frac{1}{h(x_k)}$. %Other barriers can also be defined. 
Clearly, it is sufficient to force $B$ to be bounded to guarantee safety, i.e. keeping $x_k \in \mathcal{S}^{\circ} \ \forall k$. In light of this, \autoref{invariant def}, \autoref{controlled invariant def} and the properties of the barrier function $B$, the following proposition follows. 
\begin{proposition} \label{prop:safety}
A continuous feedback controller $u_k=K(x_k)$ is safe, that is it renders $\mathcal{S}^{\circ}$ controlled invariant, if and only if $B(h(x_0))<\infty \Rightarrow B(h(x_k))<\infty \ \forall k \in \mathbb{Z}^+$.
\end{proposition}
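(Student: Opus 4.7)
The plan is to exploit the defining property of the barrier function $B$ to convert the geometric invariance condition $x_k \in \mathcal{S}^{\circ}$ into the analytic condition $B(h(x_k)) < \infty$, after which the biconditional becomes essentially tautological given \autoref{controlled invariant def}. Concretely, I would first establish the auxiliary characterization that for any $x \in \mathcal{D}$, $B(h(x)) < \infty$ if and only if $x \in \mathcal{S}^{\circ}$. The ``if'' direction is immediate from the definition of $B$ as a smooth, and hence finite-valued, function on the open set $\mathcal{S}^{\circ}$; the ``only if'' direction uses the two defining features of $B$: it is undefined (or $+\infty$ by the standard convention) on $\mathcal{D} \setminus \mathcal{S}^{\circ}$ — as one sees directly for the logarithmic and Carroll examples where $h(x) \leq 0$ makes the expressions non-real or infinite — and it diverges in the limit as a point of $\partial \mathcal{S}$ is approached, so any point with finite $B$-value must lie strictly inside $\mathcal{S}$.

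With this characterization in hand, the forward implication is direct: if $K$ is safe then $\mathcal{S}^{\circ}$ is forward invariant under the closed-loop dynamics $x_{k+1} = f(x_k, K(x_k))$, so whenever $B(h(x_0)) < \infty$ (equivalently $x_0 \in \mathcal{S}^{\circ}$) one obtains $x_k \in \mathcal{S}^{\circ}$ and hence $B(h(x_k)) < \infty$ for every $k \in \mathbb{Z}^+$. The converse is the symmetric argument: assuming the boundedness implication, any $x_0 \in \mathcal{S}^{\circ}$ yields $B(h(x_0)) < \infty$, which in turn forces $B(h(x_k)) < \infty$ and translates back via the characterization to $x_k \in \mathcal{S}^{\circ}$ for all $k$. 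This is precisely the forward invariance of $\mathcal{S}^{\circ}$ under the closed-loop system, so $K$ is safe by \autoref{controlled invariant def}.

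The main obstacle, and really the only nontrivial point, will be stating the equivalence $B(h(x)) < \infty \Leftrightarrow x \in \mathcal{S}^{\circ}$ cleanly without ambiguity about how $B$ is extended (or left undefined) outside $\mathcal{S}^{\circ}$; this is especially worth spelling out in discrete time, since unlike the continuous setting the trajectory could in principle jump over $\partial \mathcal{S}$ rather than cross it, so one cannot rely on a continuity-of-trajectory argument to rule out an exit. Once the convention on $B$ outside the safe set is fixed, the rest is pure bookkeeping and no further continuity or topological argument beyond \autoref{invariant def} is needed.
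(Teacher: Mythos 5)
Your proposal is correct and follows essentially the same route as the paper: both directions reduce to the equivalence $B(h(x))<\infty \Leftrightarrow x\in\mathcal{S}^{\circ}$ together with \autoref{invariant def} and \autoref{controlled invariant def}, which is exactly what the paper's proof invokes (there phrased as ``by the properties of the barrier functions''). Your extra care in isolating that equivalence as a lemma and fixing the convention for $B$ outside $\mathcal{S}^{\circ}$ only makes explicit a step the paper leaves implicit; no substantive difference.
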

\begin{proof}
The proof follows directly from the definitions above. 

$\Rightarrow$. Suppose there exists a continuous control law $u_k=K(x_k)$ such that $\mathcal{S}^{\circ}$ is controlled invariant for w.r.t \eqref{discrete control system}. Then, by \autoref{controlled invariant def}, $\mathcal{S}^{\circ}$ is forward invariant w.r.t. the closed loop system $x_{k+1}=f(x_k,K(x_k))$ and consequently, by \autoref{invariant def} and the definition of $\mathcal{S}^{\circ}$, $h(x_k) >0 \ \forall k \geq 0$ implying %.  Hence, by the properties and definition of barrier functions, 
$B(h(x_k))<\infty \ \forall k \geq 0$.

$\Leftarrow$. Assume $B(h(x_0))<\infty \Rightarrow B(h(x_k))<\infty \ \forall k \in \mathbb{Z}^+$ under the continuous control action $u_k=K(x_k)$. By the properties of the barrier functions, $h(x_k) >0 \ \forall k \geq 0$. Thus, by \autoref{invariant def}, $\mathcal{S}^{\circ}$ is forward invariant w.r.t. the closed loop system and hence $u_k$ is said to be safe by \autoref{controlled invariant def}.
\end{proof}
A main objective of this letter is to design a safety enforcing tool that allows us to % discrete control technique which meets the safety requirements while 
avoid possible conflicts between control objectives and safety constraints without any relaxation. To achieve this goal, the safety constraint is embedded into the system's model used to achieve control performance objectives by means of discrete barrier states (DBaS). 

\section{Discrete Time Barrier States (DBaS)} \label{Section: Discrete Time Barrier States}
Let us define the barrier function to be $\beta(x_k):=B(h(x_k))$, that is for example for the inverse barrier, $\beta(x_k)=B(h(x_k))=\frac{1}{h(x_k)}$. Let $x^d$ be the desired state to be tracked. Define $w_k:= \beta(x_k) - \beta^{d}$, where $\beta^{d}=\beta(x^d)$. %Note that $B$ is a bijection and thus its inverse function $B^{-1}$ exists. Therefore, we can also write $h(x_k)=B^{-1}(w_k+ \beta^{0})$. 
Without loss of generality, in the case of stabilization, $x^d$ will be a fixed point, e.g. the origin of the system. Consequently, we derive the discrete barrier state (DBaS) as
\begin{equation} \begin{split}
    %\Delta \beta(x_k) & = \beta(x_{k+1}) - \beta(x_k) = B(h(f(x_k,u_k)) - \beta(x_k) \\
     w_{k+1} & =  B(h(f(x_k,u_k))  - \beta^{d} \\
\end{split} \end{equation}

In some robotic applications, e.g. in obstacle avoidance problems, it is more suitable to represent the safe region by a set of functions. In such a problem, increasing the dimension of the system by including too many barrier states may inflate the problem size and complexity. Multiple safety constraints can be represented with only one DBaS by combining the barrier functions. There are some drawbacks to combining barrier states in this way: the process may reduce the amount of information available to the controller as discussed in \autoref{Section: Safety Embedded DDP}, and we may lose access to some explicit information on the safety of the system with respect to certain constraints or obstacles which would be available with multiple barrier states. Therefore, representing multiple constraints with one barrier state introduces a trade-off between state dimension and information available to resulting feedback policies.

In the discrete setting, combining the barrier functions to create a barrier state for the discrete case is simpler than the continuous case \cite{Almubarak2021SafetyEC}. For $q$ constraints, the barrier function can be chosen to be $\beta(x)=\sum_{i=1}^{q} B(h^{(i)}(x_k))$, where $h^{(i)}(x_k)$ describes the $i^{\text{th}}$ region of interest. Consequently, a single DBaS can be constructed as 
\begin{align}\begin{split} \label{w(k+1) for multiple bfs}
   w_{k+1}= %& \gamma \Omega\big(w_k + \beta^0 - \sum_{j=1,j\neq i}^{q} B \circ h^{(j)}(x_k), h^{(i)}(x_k)\big) \\ & + 
   \sum_{i=1}^{q} B\circ h^{(i)}\big(f(x_k,u_k)\big) - \beta^d
\end{split} \end{align}
where $\beta^d = \sum_{i=1}^{q} B\circ h^{(i)}(x^d)$. It must be noted that shifting the barrier state by $\beta^d$ is not necessary but ensures that the minimum lies at the desired set point which may be needed for some applications \cite{wills2004barrier}. Now, we append a vector of $p$ barrier states $w \in \mathcal{W} \subset \mathbb{R}^p$ to the model of the safety critical system \eqref{discrete control system} giving the safety embedded model %augmented dynamics $\hat x_{k+1} = \hat f(\hat x_k, u_k)$
\begin{equation} \label{safety embedded system}
     \hat{x}_{k+1} = \hat{f}(\hat{x}_k,u_k)
\end{equation}
where $\hat{x}_k= [x_k \ \ w_k]^{\rm{T}} \in \hat{\mathcal{D}} \subset \mathcal{D} \times \mathcal{W}$ and $\hat{f}:\hat{\mathcal{D}} \times \mathcal{U}\rightarrow \hat{\mathcal{D}}$ is a vector field representing the system's dynamics \eqref{discrete control system} and the barrier states' dynamics \eqref{w(k+1) for multiple bfs}. It must be noted that $\hat{f}$ is continuous due to continuity of $f$ and smoothness of $h$ and $B$. 
As a consequence of the development above, the forward-invariance of $\mathcal{S}^{\circ}$, i.e. safety of the safety-critical system, can be tied to the performance objectives of the safety embedded system \eqref{safety embedded system}. In other words, boundedness of the DBaS implies the generation of safe trajectories. Next, we use a finite horizon trajectory optimization technique, namely differential dynamic programming (DDP), to generate safely optimized trajectories.

\section{Safety Embedded DDP} \label{Section: Safety Embedded DDP}
In this section, we apply the proposed DBaS methodology to safe trajectory optimization by applying DDP \cite{mayne1966second,jacobson1967differential-PhDthesis,jacobson1970differential} to the safety embedded dynamics \eqref{safety embedded system}.
\subsection{Problem Statement}
We consider the finite horizon optimal control problem
\begin{equation}  \label{finite horizon cost functional}
    V_k(x)= \min_{U_k} \sum^{N-1}_{i=k} l(x_i,u_i) + \Phi(x_N)
\end{equation}
subject to the dynamical system \eqref{discrete control system} and the safety condition \eqref{safety condition}, where $U_k=\{u_k,u_{k+1}+ \dots + u_{N-1} \}$, $l$  and $\Phi$ are the the running cost and terminal cost respectively.
\subsection{Differential Dynamic Programming}
The well-known Bellman equation yields the following recurrence relation, with boundary condition $V_N = \Phi$:
\begin{equation} \label{Bellman eq}
V_k(x_k)=\min_{u_k} [l(x_k,u_k)+V_{k+1}(f(x_k,u_k))]
\end{equation}
The DDP algorithm iteratively solves the optimal control problem starting by expanding the value function around a nominal trajectory $(\bar{x},\bar{u})$ and solving \eqref{Bellman eq} to find a local feedback policy. Then, a new nominal trajectory for the system is computed forwards. The process is repeated until convergence.

Using the proposed DBaS technique to render $\mathcal{S}^{\circ}$ forward invariant, the constrained finite horizon optimal control problem reduces to an unconstrained optimal control problem that minimizes \eqref{finite horizon cost functional} subject to the safety embedded dynamics \eqref{safety embedded system}:
\begin{equation} \label{finite horizon cost functional for safe system}
    V_k(\hat{x})= \min_{U_k} \sum^{N-1}_{i=k} l(\hat{x}_i,u_i) + \Phi(\hat{x}_N)
\end{equation}
subject to $\hat{x}_{k+1} = \hat{f}(\hat{x}_k,u_k)$. Therefore, the associated Bellman equation can be given by $V_k(\hat{x}_k)=\min_{u_k} [l(\hat{x}_k,u_k)+V_{k+1}(\hat{f}(\hat{x}_k,u_k))]$. For the DDP equations used in the algorithm, define
\begin{align}
\begin{split} \label{eq:h_expansions}
&H_{\hat{x}_k}=l_{\hat{x}_k}  + {V^{\rm{T}}_{\hat{x}_{k+1}}} \hat{f}_{\hat{x}_k}, \ H_{u_k}= l_{u_k}  + {V^{\rm{T}}_{\hat{x}_{k+1}}} \hat{f}_{u_k} \\
&H_{\hat{x}\hat{x}_k}= l_{\hat{x}\hat{x}_k}  +   \hat{f}_{\hat{x}_k}^{\rm{T}} V_{\hat{x}\hat{x}_{k+1}} \hat{f}_{\hat{x}_k}  +  V_{\hat{x}_{k+1}} \hat{f}_{\hat{x}\hat{x}_k}  \\ 
&H_{uu_k}=  l_{uu_k}  +  \hat{f}_{u_k}^{\rm{T}} V_{\hat{x}\hat{x}_{k+1}} \hat{f}_{u_k}  + V_{\hat{x}_{k+1}} \hat{f}_{uu_k}  \\
&H_{\hat{x}u_k}=  l_{\hat{x}u_k}  +  \hat{f}_{\hat{x}_k}^{\rm{T}}  V_{\hat{x}\hat{x}_{k+1}} \hat{f}_{u_k}  + V_{\hat{x}_{k+1}} f_{\hat{x}u_k}
\end{split}
\end{align}
Then, the optimal variation may be given by
\begin{equation} \label{optimal var u}
\delta u^*_k = - H_{uu_k}^{-1} \big( H_{u_k}^{\rm{T}} +  H_{u \hat{x}_k} \delta \hat{x}_k\big) = {\rm{\mathbf{k}}}_k + {\rm{\mathbf{K}}}_k \delta \hat{x}
\end{equation}
where ${\rm{\mathbf{k}}}_k = - H_{uu_k}^{-1} H_{u_k}^{\rm{T}}, {\rm{\mathbf{K}}}_k =  - H_{uu_k}^{-1} H_{u \hat{x}_k}$ and $\delta x_k = x_k - \bar{x}_k, \delta u_k=u_k-\bar{u}_k$ represent deviations from the nominal state and control sequences, respectively. Now, as we need to minimize the expanded Bellman equation, setting each power of approximation to zero leads to the Riccati equations for  $V_k$, $V_{\hat{x}_k}$ and $V_{\hat{x}\hat{x}_k}$ that are solved to get
\begin{align} \begin{split}
\label{Backward Propagation equations of V}
& V_k = V_{k+1}-\frac{1}{2} H_{u_k} H_{uu_k}^{-1} H_{u_k}^{\rm{T}}\\
& V_{\hat{x}_k}= H_{\hat{x}_k} - H_{\hat{x}u_k} H_{uu_k}^{-1} H_{u_k} \\
& V_{\hat{x}\hat{x}_k}=\frac{1}{2} (H_{\hat{x}\hat{x}_k}-H_{\hat{x}u_k} H_{uu_k}^{-1} H_{u\hat{x}_k} )
\end{split} \end{align}
which are the equations used for the backward propagation. Consequently, one can compute $V_k$ and its gradient and Hessian along the states' trajectory as well as the optimal variation $\delta u$ backwards from $k=N-1$ to $k=1$ with the initialization $V_N(\hat{x}_N)=l_f(\hat{x}_N)$. Next, the new trajectory is computed forwards and the process is repeated until convergence.

Note that for the DDP problem to be well-defined, we must have $H_{uu_k} \succ 0$ \cite{jacobson1970differential}. %When this is not the case, it is common to apply some regularization by letting $H_{uu} = l_{uu} + \bar f_u^{\rm{T}} (V_{\hat{x} \hat{x}_k+1} + \mu I) \bar f_u$
For this to be the case it is sufficient to have that $V_{\hat{x}\hat{x}_k} \succ 0$ for all $k$. However, in the general case there is a distinct possibility that $l_{xx}$ is indefinite: it is perfectly reasonable for the cost function to be locally non-convex and in fact this is necessarily the case in the obstacle-avoidance problem. In contrast, the DBaS cost remains a convex function of the states, meaning that for a cost function $l(\hat{x}, u) = l_x(x, u) + l_w(w)$ its hessian $l_{\hat{x}\hat{x}}$ remains positive.
\begin{theorem}
\label{thm:posdef}
Assume the state is of the form $\hat{x} = [x \ w]^{\rm{T}}$ where $x$ is the real state of the system and $w$ is the barrier state. Further, let $\ell(\hat{x}, u)$ be of the form $\ell(\hat{x}, u) = \ell^x (x, u) + \ell^w (w)$, where $\ell_{ww} \succ 0$ and $\begin{pmatrix} \ell_{xx} & \ell_{xu} \\ \ell_{ux} & \ell_{uu} \end{pmatrix} \succ 0$, and assume second-order dynamics terms $f_{xx}$ are ignored.
Then, $V_{\hat{x}\hat{x}_k} \succ 0 \ \forall k$.
\end{theorem}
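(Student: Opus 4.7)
The plan is to prove the result by backward induction on $k$, using a Schur complement argument at each stage. For the base case at $k=N$ we have $V_{\hat{x}\hat{x}_N} = \Phi_{\hat{x}\hat{x}}$; I would either explicitly add the assumption $\Phi_{\hat{x}\hat{x}} \succ 0$ (standard in DDP analyses and automatically met by the quadratic terminal costs used in the experimental section) or impose on $\Phi$ the same separable block structure assumed on $\ell$ and run the induction argument one extra time to obtain it.

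For the inductive step, suppose $V_{\hat{x}\hat{x}_{k+1}} \succ 0$. Rather than attack \eqref{Backward Propagation equations of V} directly, I would first show that the full ``$Q$-function'' Hessian
\begin{equation*}
M_k := \begin{pmatrix} H_{\hat{x}\hat{x}_k} & H_{\hat{x}u_k} \\ H_{u\hat{x}_k} & H_{uu_k} \end{pmatrix}
\end{equation*}
is positive definite. With the second-order dynamics terms dropped per hypothesis, the expressions in \eqref{eq:h_expansions} collapse to the clean splitting
\begin{equation*}
M_k = \begin{pmatrix} \ell_{\hat{x}\hat{x}} & \ell_{\hat{x}u} \\ \ell_{u\hat{x}} & \ell_{uu} \end{pmatrix} + \begin{pmatrix} \hat{f}_{\hat{x}}^{\rm{T}} \\ \hat{f}_u^{\rm{T}} \end{pmatrix} V_{\hat{x}\hat{x}_{k+1}} \begin{pmatrix} \hat{f}_{\hat{x}} & \hat{f}_u \end{pmatrix}.
\end{equation*}
The second summand is positive semi-definite since it is the outer quadratic form $\Lambda^{\rm{T}} V_{\hat{x}\hat{x}_{k+1}} \Lambda$ with $V_{\hat{x}\hat{x}_{k+1}} \succ 0$ by the induction hypothesis.

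The crucial leverage then comes from the separable structure $\ell(\hat{x},u) = \ell^x(x,u) + \ell^w(w)$: the cross derivatives $\ell_{xw}$ and $\ell_{wu}$ vanish, so a symmetric row/column permutation turns the first summand into a block-diagonal matrix with blocks $\ell_{ww}$ and $\begin{pmatrix} \ell_{xx} & \ell_{xu} \\ \ell_{ux} & \ell_{uu} \end{pmatrix}$, both positive definite by hypothesis. Hence $M_k$ is the sum of a positive definite matrix and a positive semi-definite one, so $M_k \succ 0$. Since $H_{uu_k}$ is a principal submatrix of $M_k$ it inherits positive definiteness, and the standard Schur complement lemma yields $H_{\hat{x}\hat{x}_k} - H_{\hat{x}u_k} H_{uu_k}^{-1} H_{u\hat{x}_k} \succ 0$, which by \eqref{Backward Propagation equations of V} is (up to the factor $\frac{1}{2}$) exactly $V_{\hat{x}\hat{x}_k}$, closing the induction. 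The main delicacy I anticipate is the base case at $k=N$, since the theorem statement is silent about $\Phi$; everything else is essentially block-matrix bookkeeping exploiting the separability of the cost and the hypothesis that the Gauss--Newton approximation to the dynamics Hessians is used.
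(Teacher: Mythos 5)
Your proposal is correct and follows essentially the same route as the paper: backward induction in which the separability $\ell(\hat{x},u)=\ell^x(x,u)+\ell^w(w)$ makes the combined cost-plus-value Hessian positive definite, and minimization over $u$ preserves positive definiteness --- your Schur-complement step on $M_k$ is just the algebraic form of the paper's explicit minimization of the second-order Bellman expansion (and is in fact tidier, since the paper's chain $\ge \delta\hat{x}_{k+1}^{\rm{T}} V_{\hat{x}\hat{x}_{k+1}}\delta\hat{x}_{k+1} > 0$ glosses over the case $\delta\hat{x}_{k+1}=0$, where positivity really comes from the cost terms). You are also right that the base case needs $\Phi_{\hat{x}\hat{x}}\succ 0$; the paper simply invokes this ``by assumption.''
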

\begin{proof}
By induction: for the base case, we have by assumption that $V_{\hat{x}\hat{x}_N}$ is positive-definite, as it is simply $\Phi_{\hat{x}\hat{x}}$.

Then, we want to show that if $V_{\hat{x} \hat{x}_{k+1}}$ is positive definite we also have that $V_{\hat{x}\hat{x}_{k}}$ is positive definite. Examine the second-order expansion of the Bellman equation:
\begin{align*}
    \delta \hat{x}_k^{\rm{T}} V_{\hat{x}\hat{x}_k} \delta \hat{x_k} &= \min_{u} \begin{pmatrix}\delta x_k \\ \delta u_k\end{pmatrix}^{\rm{T}} \begin{pmatrix}\ell_{xx_k} & \ell_{xu_k} \\ \ell_{ux_k} & \ell_{uu_k}\end{pmatrix}\begin{pmatrix}\delta x_k \\ \delta u_k\end{pmatrix} \\
    &\quad+\delta w_k^{\rm{T}} \ell_{ww_k}\delta w_k + \delta \hat{x}_{k+1}^{\rm{T}} \bar V_{\hat{x}\hat{x}_{k+1}} \delta \hat{x}_{k+1} \\
    &\ge \delta \hat{x}_{k+1}^{\rm{T}} V_{\hat{x}\hat{x}_{k+1}} \delta \hat{x}_{k+1} > 0 \\
    \delta u_k^{\rm{T}}H_{uu_k}\delta u_k &= \delta u_k^{\rm{T}}\left(l_{uu} + f_{u}^{\rm{T}} V_{\hat{x} \hat{x}_{k+1}} f_{u}\right)\delta u \\
    &\ge (f_u \delta u)^{\rm{T}} V_{\hat{x}\hat{x}_{k+1}} (f_u\delta u_k) \ge 0
\end{align*}
So $V_{\hat{x}\hat{x}_k}$ is positive definite, and furthermore $H_{uu}$ is also positive definite. By induction, this holds for all $k$.
\end{proof}

A natural question is whether it is advantageous to include the barrier state in the model of the system's dynamics instead of simply adding the barrier to the cost function in the optimization problem as in some constrained DDP approaches, known as penalty methods. In those methods, the modified optimization problem given by the cost function $l(x, u) = \beta^2(x) + l'(x, u)$, where $\beta$ is a barrier function, appears at the surface level to be equivalent to the proposed safety embedded DDP formulation, which we term DBaS-DDP. % with $\gamma = 0$.
While any local optimum for the DBaS based optimal control problem is also an optimum for the penalty method, the two mechanisms differ substantially in their interaction with the optimizer. Firstly, in many robotic applications, this new cost function is highly locally non-convex. \autoref{thm:posdef} explains part of the practical improvement seen when using barrier states over simple penalty methods: it moves some nonlinear terms from the cost function to the dynamics, removing local non-convexity from the problem. In this sense, the DBaS-DDP has a regularizing effect on the algorithm when applied to highly non-convex cost functions.
%In practice, this is of a great value as any cost function that causes the optimal plan to avoid a convex and bounded unsafe region will necessarily be non-convex.
It is worth mentioning that for the experiments presented in \autoref{Section: Safety Embedded DDP Examples}, the DBaS-DDP did not need any explicit regularization (i.e. adding a multiple of identity to $H_{uu_k}$ so it is positive definite), unlike the penalty method. Secondly, in the case of DBaS-DDP the optimizer has richer information and can better anticipate the progression of the cost in the forward pass:  by embedding the barrier state in the dynamics, the feedback mechanism present in the forwards pass has access to the exact value of the barrier state according to the nonlinear dynamics rather than a quadratic approximation. In this sense, the optimization process can be considered a joint optimization over the real state, barrier state, and controls as decision variables, yielding a smoother cost landscape.

Under certain conditions, with the incorporation of line search, standard DDP is able to guarantee that a single iteration of DDP will improve the trajectory's cost. Similarly, we show that in our formulation, a single iteration of DDP-DBaS is able to find a \textit{safe} trajectory with improved cost.

\begin{theorem}[Improvement of Safe Trajectory]
Let $(\bar x, \bar u)$ be a safe nominal trajectory, and let %$\delta u = K\delta x + \epsilon k 
$ \delta u = \epsilon {\rm{\mathbf{k}}} + {\rm{\mathbf{K}}} \delta \hat{x}
$. If $\delta u_k$ is nonzero for some $k$, then there exists some $0 < \epsilon \le 1$ such that 
for the objective function $J=\sum^{N-1}_{k=1} l(\hat{x}_k,u_k) + \Phi(\hat{x}_N)$, $J(\bar x + \delta x, \bar u + \delta u) < J(\bar x, \bar u)$ and $\bar x + \delta x, \bar u + \delta u$ is a safe trajectory.
\end{theorem}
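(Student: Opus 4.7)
The claim decouples naturally into two sub-statements, each controlled by picking $\epsilon$ small enough: (i) the rolled-out trajectory stays strictly inside $\mathcal S^\circ$ at every time step, and (ii) the objective $J$ strictly decreases. My plan is to establish each separately, obtaining safety threshold $\epsilon_{\mathrm{safe}}$ and cost threshold $\epsilon_{\mathrm{cost}}$, and then pick $\epsilon = \min(\epsilon_{\mathrm{safe}}, \epsilon_{\mathrm{cost}}, 1) > 0$. The two ingredients available to me are the safety of the nominal (which gives a strictly positive slack in $h$) together with continuity of $\hat f$, and the positive definiteness of $H_{uu_k}$ established in Theorem \ref{thm:posdef}.

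For safety, I would exploit that the horizon $N$ is finite and $(\bar x, \bar u)$ satisfies $h(\bar x_k) > 0$ for every $k \in \{0,\dots,N\}$, so the slack $\eta := \min_{k} h(\bar x_k)$ is strictly positive. Rolling out the perturbed controls $u_k = \bar u_k + \epsilon\mathbf k_k + \mathbf K_k(\hat x_k - \bar{\hat x}_k)$ from $\hat x_0 = \bar{\hat x}_0$ and using continuity of $\hat f$ together with a straightforward induction shows that $\|x_k - \bar x_k\|$ is bounded uniformly in $k \le N$ by a quantity that tends to $0$ as $\epsilon \to 0$. Continuity of $h$ then yields an $\epsilon_{\mathrm{safe}}>0$ such that $|h(x_k) - h(\bar x_k)| < \eta/2$ for all $k$, which forces $h(x_k) > \eta/2 > 0$ and hence $x_k \in \mathcal S^\circ$. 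Equivalently, one can observe that if the step ever left $\mathcal S^\circ$ the barrier state $w_k$ (and therefore $J$) would diverge, so any strictly cost-decreasing step is automatically safe; I would use this dual view as a sanity check rather than the main argument, since the direct continuity route gives the cleaner $\epsilon$-bound.

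For cost improvement I would follow the standard DDP line-search analysis, but rely on Theorem \ref{thm:posdef} for positive-definiteness without needing ad hoc regularization. Expanding $J(\bar x + \delta x, \bar u + \delta u)$ around the nominal and using the backward-pass identity $\mathbf k_k = -H_{uu_k}^{-1} H_{u_k}^{\rm{T}}$ together with the dynamic-programming relations \eqref{Backward Propagation equations of V}, the expected reduction at leading order is
\begin{equation*}
\Delta J = -\,\epsilon \sum_{k=0}^{N-1} \mathbf k_k^{\rm{T}} H_{uu_k}\mathbf k_k \;+\; O(\epsilon^2),
\end{equation*}
where the feedback term $\mathbf K_k \delta \hat x_k$ only contributes at $O(\epsilon^2)$ because $\delta \hat x_k = O(\epsilon)$ for a safe rollout. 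Since $H_{uu_k} \succ 0$ by Theorem \ref{thm:posdef} and $\mathbf k_k \neq 0$ for at least one index (otherwise $\delta u \equiv 0$, contradicting the hypothesis), the linear term is strictly negative; choose $\epsilon_{\mathrm{cost}}$ small enough that it dominates the quadratic remainder.

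The main technical obstacle is the $O(\epsilon^2)$ bookkeeping in the second step: one must be careful that the remainder does not blow up as the trajectory approaches $\partial \mathcal S$ through the barrier $B$. This is precisely where the safety part of the argument has to come first, because once $\epsilon \le \epsilon_{\mathrm{safe}}$ the barrier values $B(h(x_k))$ are bounded uniformly on the perturbed rollout, so $\hat f$ and its first two derivatives are Lipschitz along the rollout and the standard Taylor remainder estimate applies. Taking $\epsilon = \min(\epsilon_{\mathrm{safe}}, \epsilon_{\mathrm{cost}}, 1)$ then yields both strict cost improvement and a safe trajectory, completing the proof.
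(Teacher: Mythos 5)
Your proof is correct and follows essentially the same route as the paper's: a Taylor expansion of $\Delta J$ in the line-search parameter $\epsilon$ whose leading term $-\epsilon\sum_k H_{u_k}H_{uu_k}^{-1}H_{u_k}^{\mathrm{T}}$ is strictly negative by \autoref{thm:posdef}, combined with openness of the safe set around the safe nominal trajectory (equivalently, the observation that a cost-decreasing step cannot cross the barrier, since crossing would make the cost unbounded). Your extra care in fixing $\epsilon_{\mathrm{safe}}$ first, so that the barrier values and hence the Taylor remainder stay uniformly bounded along the perturbed rollout, merely tightens a step the paper leaves implicit.
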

\begin{proof}
Because $\bar x, \bar u$ is safe and the safe set is open, there exists some neighborhood $U$ of safe trajectories near $\bar x, \bar u$.
Define $J(\pi)$ as the objective function: $J(\pi) = \sum_{k=1}^{N-1}l(\hat x_k, u_k) + \Phi(\hat x_N)$ with the update rule $\delta u_k = u_k - \bar u_k = %K_k\delta x_k + \epsilon k_k
 {\rm{\mathbf{K}}}_k \delta \hat{x}_k + \epsilon {\rm{\mathbf{k}}}_k$.
Find partial derivatives with respect to $u_k$ using Bellman's principle $\Delta J= [l(\hat x_k, u_k) + V(f(\hat x_k, u_k))] - [l(\bar x_k, \bar u_k) + V(f(\bar x_k, \bar u_k))]$ to get $
     \frac{\partial}{\partial u_k} \Delta J = H_{u_k}, \
    \frac{\partial^2}{\partial u_k^2} \Delta J = H_{uu_k}
    $.

Rewrite the objective function $\Delta J$ as a function of the parameter $\epsilon$, with Taylor expansion around zero:
\[\Delta J(\epsilon) = \sum_{k=1}^{N-1}\left[H_{u_k}\frac{\partial u_k}{\partial \epsilon}\epsilon + \frac{1}{2}\frac{\partial u_k}{\partial\epsilon}^{\rm{T}}H_{uu_k}\frac{\partial u_k}{\partial \epsilon}\epsilon^2\right] + \mathcal{O}(\epsilon^{3})\]
Finally, because $u_k = \bar u_k + %K_k\delta \hat x_k + \epsilon k_k
{\rm{\mathbf{K}}}_k \delta \hat{x}_k + \epsilon {\rm{\mathbf{k}}}_k$, we have $\frac{\partial u_k}{\partial \epsilon} = k_k = -Q_{uu}^{-1}$:
\begin{align*}
    \Delta J(\epsilon) &= \sum_{k=1}^{N-1}\left[\frac{\epsilon^2}{2}H_{u_k}H_{uu_k}^{-1}H_{u_k}^{\rm{T}}-\epsilon 
    H_{u_k}H_{uu_k}^{-1}H_{u_k}^{\rm{T}}\right] + \mathcal{O}(\epsilon^{3}) \\
    &= (\frac{1}{2}\epsilon^2-\epsilon)\sum_{k=1}^{N-1}\left[H_{u_k}H_{uu_k}^{-1}H_{u_k}^{\rm{T}}\right] + \mathcal{O}(\epsilon^{3})
\end{align*}
By \autoref{thm:posdef}, we know that $H_{uu_k}$ is positive definite. Then, the summation is positive and so for $\epsilon \le 1$ the lower-order terms are negative. In addition, by making $\epsilon$ small we can reduce the higher-order terms to be arbitrarily small compared to the reduction in cost.

Finally, because there is an open neighborhood $U$ of safe trajectories around the nominal trajectory, by making $\epsilon$ small we will find a trajectory within the safe set. In particular, because a trajectory is safe if and only if it has a bounded cost by design, any $\epsilon$ that leads to a reduction in cost yields a safe trajectory.
\end{proof}

\section{Safety Embedded DDP Application Examples} \label{Section: Safety Embedded DDP Examples}
In this section, we conduct qualitative and quantitative comparisons between our proposed algorithm and some commonly-used methods of enforcing safety. Namely, we compare against the penalty-DDP method, in which a barrier cost is added to the state cost function (similar to the DBaS-DDP cost) as described earlier, and the traditional CBF-based method by wrapping DDP with a CBF filter.

Results from all experiments, including success rates and mean costs, are recorded in \autoref{tab:robustness}. In this table costs are normalized to allow for comparison between experiments, by expressing each cost as a fraction of the cost achieved using DBaS-DDP. In each case, DBaS-DDP achieves the lowest cost and the highest success rate.

Convergence speed is detailed in \autoref{tab:timing}. M.I. indicates the average number of iterations needed to solve a problem (i.e. until the trajectory reaches a threshold of the goal) and C.I. indicates the number of iterations before the algorithm has converged (the difference in cost between successive iterations drops below $10^{-3}$). Only successfully completed problems are included in this timing table, as including failures as the maximum iteration count would artificially inflate the number of iterations for the penalty-DDP method (which often fails to reach the goal). It can be seen that DBaS-DDP takes a few more iterations on average but brings substantially higher success rates and lower costs as shown in \autoref{tab:robustness}.

In all experiments in this section, we pick a quadratic cost function $J=\sum^{N-1}_{i=k} x_k^{\rm{T}} Q x_k + w_k^{\rm{T}} q_w w_k + u_k^{\rm{T}} R u_k + x_N^{\rm{T}} S x_N + w_N^{\rm{T}} s_w w_N$ with $Q=\mathbf{0}_{n\times n}, R=0.005I$. It is worth mentioning that tuning for DDP parameters specific to the safety embedded case is not required; the parameters from the unconstrained case works well with $q_w > 0$ to ensure safety. The continuous time systems were discretized using
Euler methods with $\Delta t=0.02$. All problems are initialized with a steady-state nominal trajectory (hovering for quadrotor and zero input for other examples). Note that in all experiments, the relative degree of the safety constraint is higher than 1 and is in some cases ill-conditioned, meaning that CBFs are either have limited behavior or impossible to apply. Because of this, we only consider CBFs in the planar double-integrator and cart-pole examples. To address the possibility of stepping across a barrier in a single discrete step, the interior of a barrier is assumed to have infinite cost, ensuring that such trajectories are rejected by line search. This effect can also be addressed with a finer discretization.

\begin{table}[]
\caption{Comparison of DBaS-DDP, the penalty method, and CBF-QP safety filtering.}
\centering
\begin{tabular}{|c|cc|cc|cc|}
\hline
\multirow{2}{*}{Experiment} & \multicolumn{2}{c|}{DBaS}                          & \multicolumn{2}{c|}{Penalty}    & \multicolumn{2}{c|}{CBF}        \\ \cline{2-7} 
                            & \multicolumn{1}{c|}{\%}           & Cost           & \multicolumn{1}{c|}{\%}   & Cost  & \multicolumn{1}{c|}{\%} & Cost  \\ \hline
Point Robot         & \multicolumn{1}{c|}{\textbf{95}}  & \textbf{1.00x} & \multicolumn{1}{c|}{77}   & 1.17x & \multicolumn{1}{c|}{79} & 2.54x \\
Diff. Wheeled       & \multicolumn{1}{c|}{\textbf{82}}  & \textbf{1.00x} & \multicolumn{1}{c|}{21.7} & 4.69x  &\multicolumn{1}{c|}{-}  & -     \\
Cart-pole           & \multicolumn{1}{c|}{-}            & \textbf{1.00x} & \multicolumn{1}{c|}{-}    & 1.26x & \multicolumn{1}{c|}{-}   &  1.74x     \\
Quad (1 obstacle)           & \multicolumn{1}{c|}{\textbf{100}} & \textbf{1.00x} & \multicolumn{1}{c|}{81}   & 1.57x & \multicolumn{1}{c|}{-}  & -     \\
Quad (3 obstacles)          & \multicolumn{1}{c|}{\textbf{90}}  & \textbf{1.00x} & \multicolumn{1}{c|}{37}   & 3.84x & \multicolumn{1}{c|}{-}  & -     \\
Quad (Random)       & \multicolumn{1}{c|}{\textbf{96}}  & \textbf{1.00x} & \multicolumn{1}{c|}{59}   & 1.87x & \multicolumn{1}{c|}{-}  & -     \\ \hline
\end{tabular}
\label{tab:robustness}
\end{table}

\begin{table}[h!]
\caption{Timing and convergence information. M.I. is for minimum iterations needed to reach the goal and C.I. is the iterations needed to achieve convergence.}
\centering
\begin{tabular}{|c|cc|cc|cc|}
\hline
\multirow{2}{*}{Experiment} & \multicolumn{2}{c|}{Unconstrained} & \multicolumn{2}{c|}{Penalty} & \multicolumn{2}{c|}{DBaS}\\ \cline{2-7} 
                            & \multicolumn{1}{c|}{M.I.} & C.I.   
                            & \multicolumn{1}{c|}{M.I.} & C.I.   
                            & \multicolumn{1}{c|}{M.I.} & C.I.   \\ \hline
Point Robot\tablefootnote{Unconstrained point robot is an LQ problem and completes in one step.}
                            & \multicolumn{1}{c|}{1}    & 1
                            & \multicolumn{1}{c|}{1.39} & 3.87 
                            & \multicolumn{1}{c|}{2.82} & 10.47 \\
Diff. Wheeled               & \multicolumn{1}{c|}{10.48}& 19.48       
                            & \multicolumn{1}{c|}{11.19}& 24.40       
                            & \multicolumn{1}{c|}{13.96}& 25.08 \\
Cart-pole\tablefootnote{Cart-pole is a qualitative example, i.e. no randomized experiments.}
                            & \multicolumn{1}{c|}{2}   &  17     
                            & \multicolumn{1}{c|}{284} &  408     
                            & \multicolumn{1}{c|}{37}   &  63    \\
Quadrotor                   & \multicolumn{1}{c|}{1}    &  12
                            & \multicolumn{1}{c|}{1.24} &  16.25
                            & \multicolumn{1}{c|}{1.93} &  31.82     \\ \hline
\end{tabular}
\label{tab:timing}
\end{table}

\subsection{Planar Double-Integrator (point robot)}
As a simple proof-of-concept we apply DBaS-DDP to the planar double-integrator problem. In this problem, an omnidirectional robot navigates from an initial position to a goal. The barrier state was defined, by \eqref{w(k+1) for multiple bfs}, such that the safe set is the exterior of several circles each with safe set defined by $h_i(x) = \lVert x - o_i \rVert^2 - r_i^2$. We pick $S = \text{diag}(4000, 4000, 400, 400)$ for terminal cost.

\autoref{fig:double_integrator} shows a collision-free course planned by DBaS-DDP starting from the initial point $(0,0)$ to the goal $(3,3)$. The figure also shows solutions generated by the penalty method and a higher order CBF \cite{nguyen2016exponential} QP that is wrapped around the unconstrained solution. Two choices of parameter ($q_w$ for DDP-based methods and $\alpha$ for CBF) are shown for each method. Small barrier penalization $q_w$ causes tighter constraint tolerance near the obstacle, but can cause the penalty method to easily get stuck in local minima. Large values of $\alpha$ in CBFs cause smaller nominal deviation from the trajectory but require sharp inputs when near a barrier.

To empirically test the results of \autoref{thm:posdef}, we compare the eigenvalues of $H_{uu_k}^{-1}$ between the naive penalty method and DBaS-DDP. We find that for the penalty method, we have a minimum eigenvalue (across an entire run of DDP) of $-0.173$, meaning that although the penalty method is able to find some non-intersecting path in this case, it is despite numerical ill-conditioning. Comparatively, using DBaS-DDP the minimum eigenvalue is $0.1$, meaning that $H_{uu_k}$ is numerically well-conditioned across the entire sweep.

We also perform robustness testing with randomized obstacle configurations. In these tests the robot is to safely move from $(0, 0)$ to $(3, 3)$ while avoiding circular obstacles randomly generated in the box with corners at $(3, -2), (5, 0), (0, 5), (-2, 3)$. Success is defined as a trajectory that is able to reach within 0.3 units of the goal. \autoref{tab:robustness} shows the results of penalty-DDP, DBaS-DDP, and CBF over a uniform distribution of obstacle count from 1 to 10 and \autoref{fig:DI_success_rate} shows these results broken down by number of obstacles.
\begin{figure}[h]
    \vspace{-3mm}
    \centering
    \hspace*{-4mm}
    \subfloat{\includegraphics[trim=50 0 40 20, clip, width=0.38\linewidth]{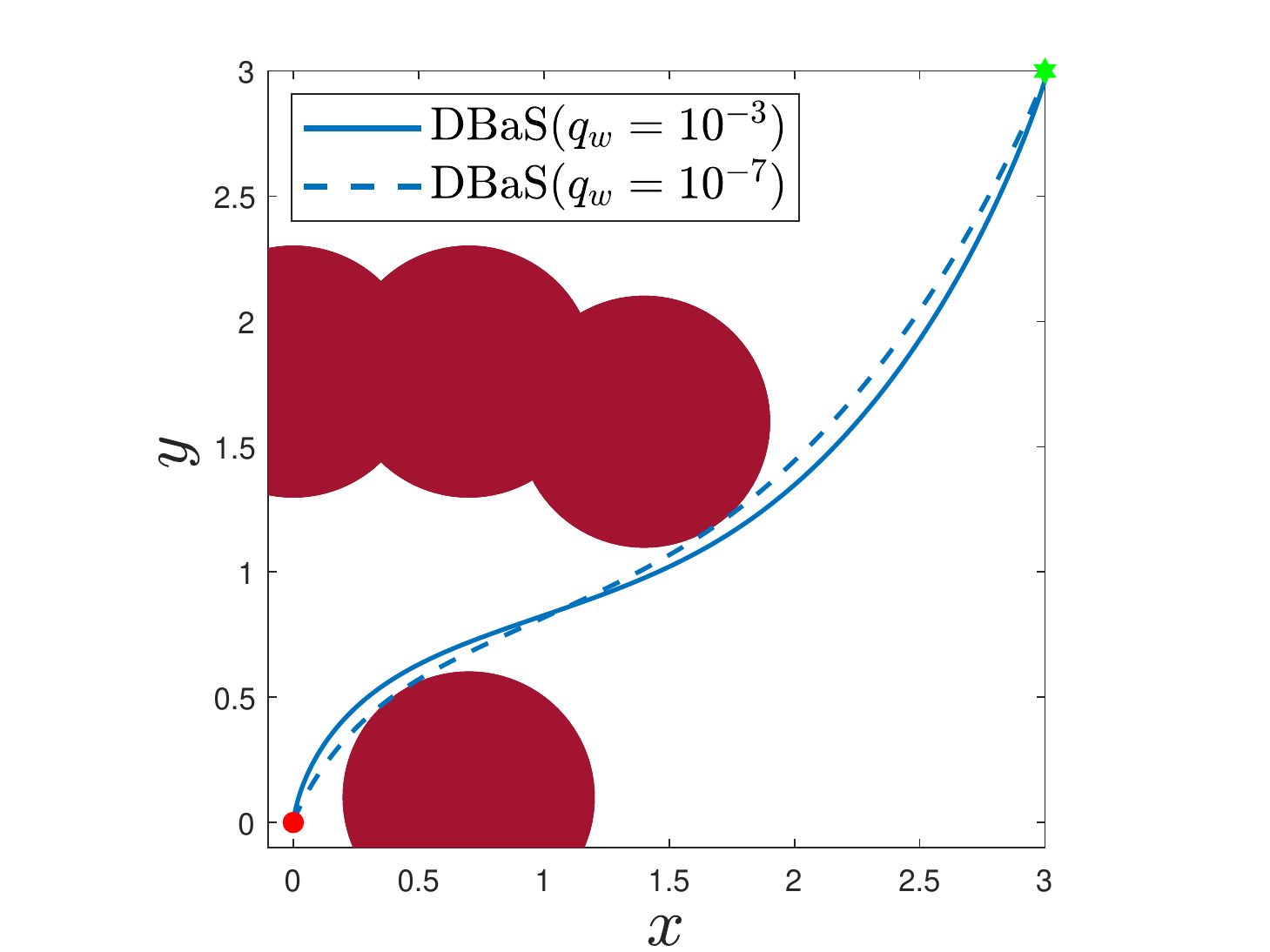}}
    \hspace*{-4mm}
    \subfloat{\includegraphics[trim=50 0 40 20, clip, width=0.38\linewidth]{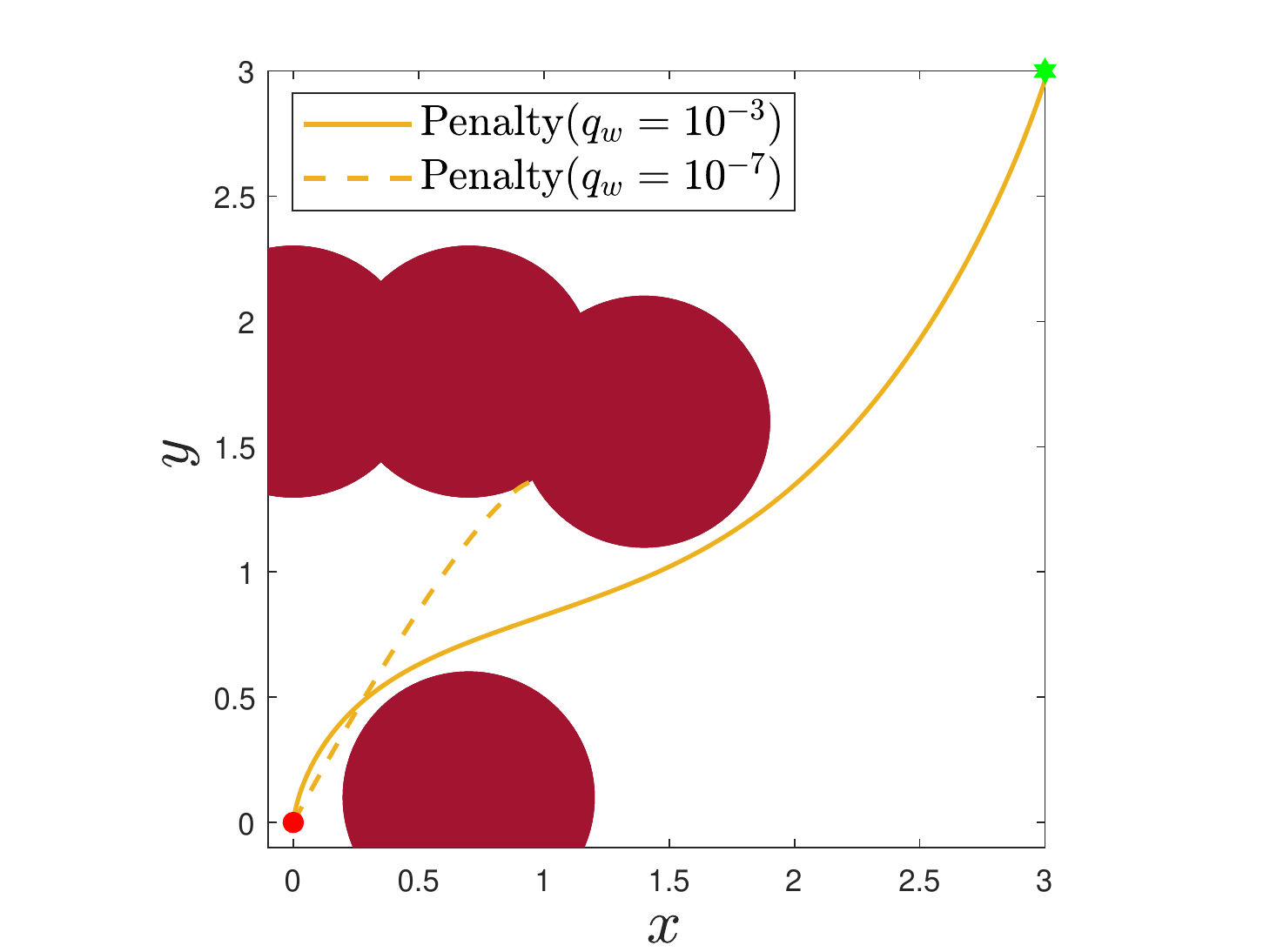}}
    \hspace*{-4mm}
    \subfloat{\includegraphics[trim=50 0 40 20, clip, width=0.38\linewidth]{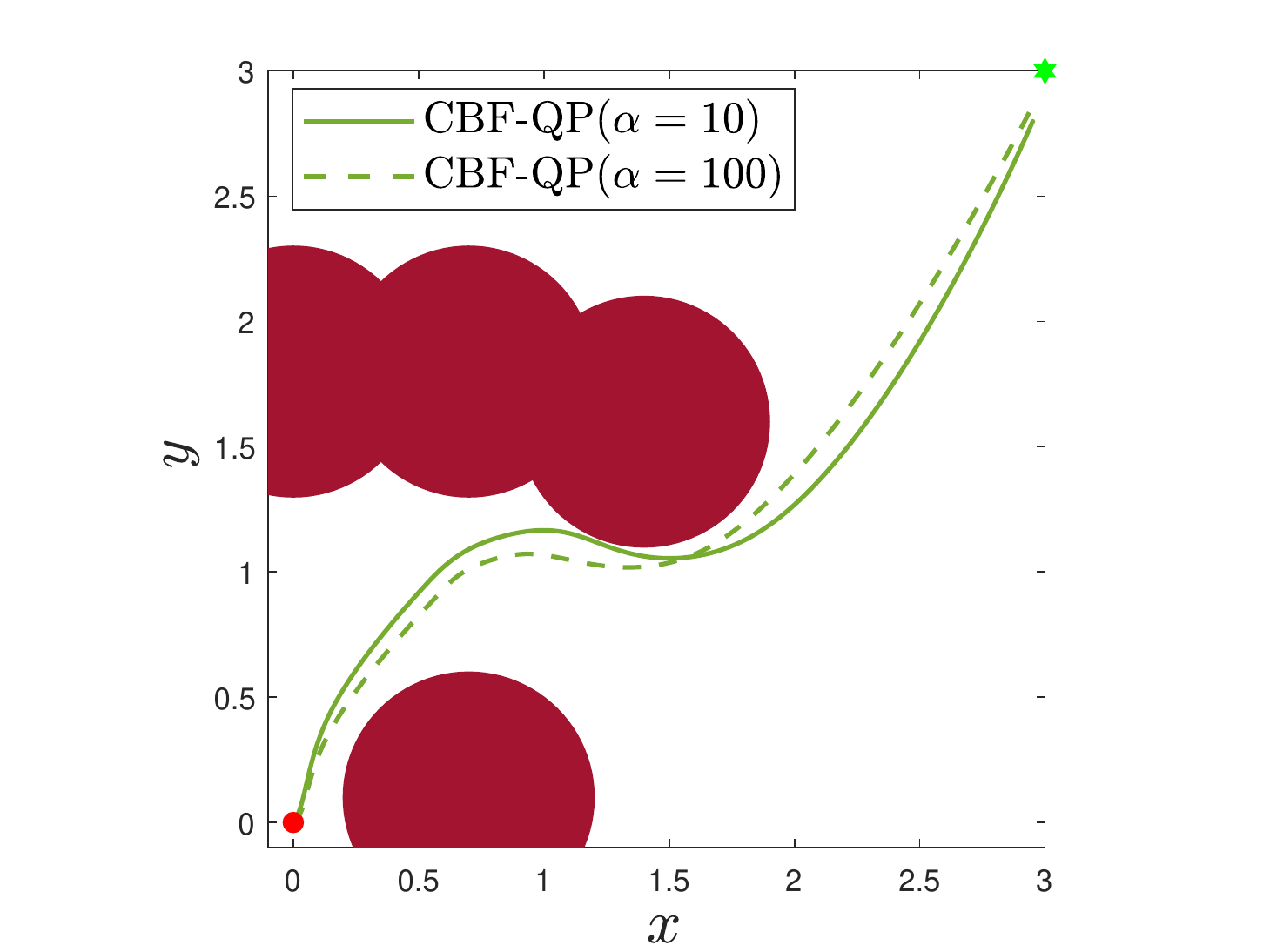}}
    %\subfloat{\includegraphics[trim=15 0 30 0, clip, width=0.45\linewidth]{Figures/DI_neg3dbas_penalty_1010cbf.pdf}}
    %\subfloat{\includegraphics[trim=15 0 30 0, clip, width=0.45\linewidth]{Figures/DI_neg7dbas_penalty_100100cbf.pdf}}
    \caption{Planar double-integrator navigation with DBaS (left), penalty (center) and a higher order CBF \cite{nguyen2016exponential} (right) with different parameters.}% in which the dark-red circles represent obstacles. As expected, very small values of $q_w$ lead to very tight constraint tolerances when near a barrier but the penalty method fails to provide solution when more tolerant solution is desired. For the CBF solution, low $\alpha$ provides conservative solutions and thus poorly behaved trajectories while large $\alpha$ provides smoother trajectories but with large and sharp control.}}
    \label{fig:double_integrator}
    \vspace{-5mm}
\end{figure}

\begin{figure} [htb]
    \centering
    \includegraphics[trim=0 5 40 0, width=0.7\linewidth]{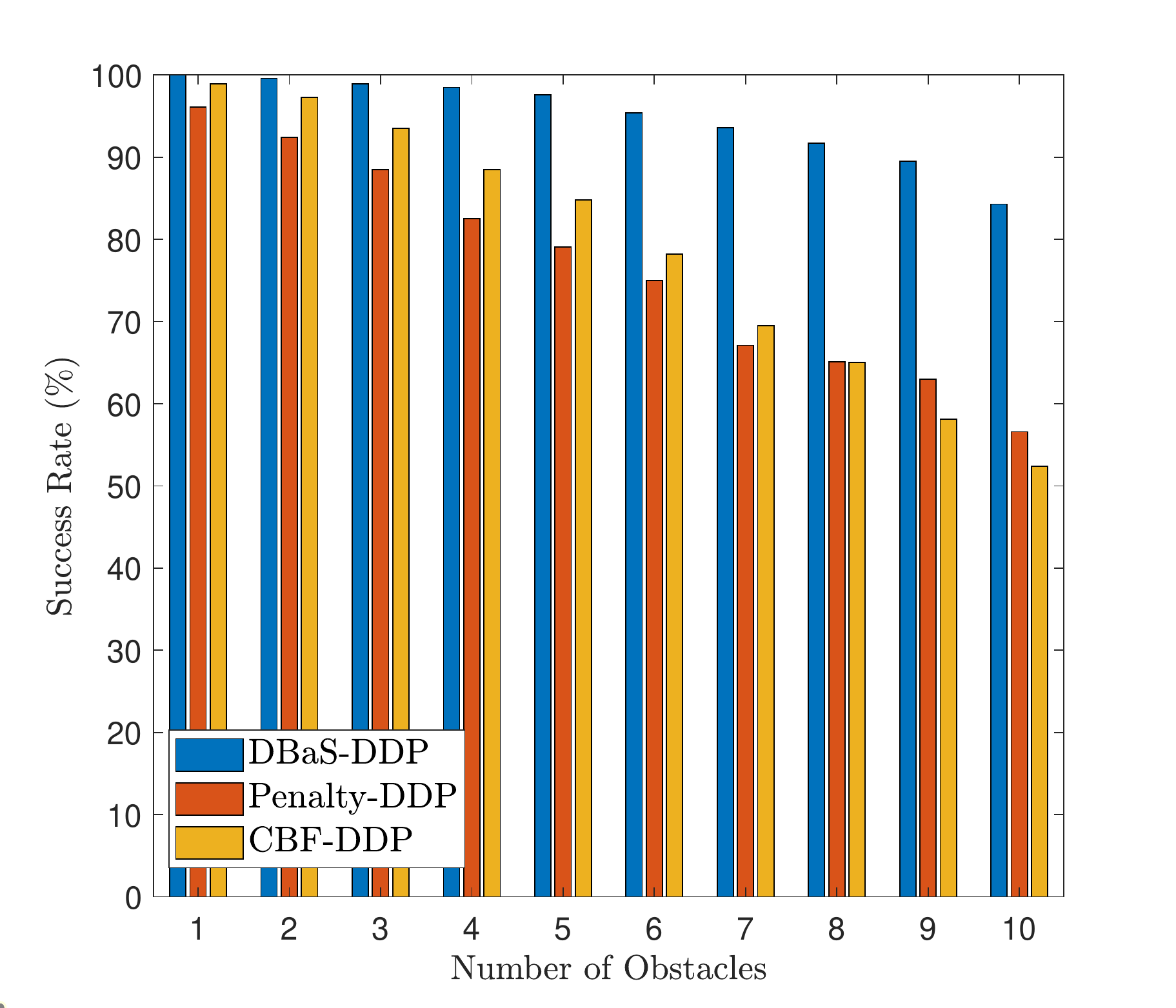}
    \caption{Success rates of DBaS-DDP (blue), penalty method (red), and CBF (yellow) on planar double integrator example with varying obstacle counts.}
\label{fig:DI_success_rate}
\end{figure}

\subsection{Cart-pole Swing up}
We demonstrate the applicability of DBaS-DDP to non-collision-avoidance problems using the cart-pole system, in which the controller must swing up the pole in 3 seconds by moving the cart while adhering to the safety constraints on the cart's position. We use the system dynamics from \cite{marcus2020safe}, but with a tighter constraint in the cart's position and with no modification of the safety constraint to obtain a low relative degree. Namely, %the safe set as a function of the cart's position is defined by
we define our safe set by $h(x) = x^2_{\text{lim}} - x^2$ where $x_{\text{lim}}=1.5$. We pick $R=0.05, q_w=10^{-3}$ and $S = \text{diag}(50,800,10,10)$. For the high order CBF, the vector $\alpha =[100 \ \ 200]$ gave the best feasible results. Results are shown in \autoref{fig:Cart pole}. Comparatively, we conclude that:
\begin{compactitem}
    \item Unconstrained DDP violates the safety constraint.
    \item The CBF-based method maintains safety but needed a great deal of tuning to ensure feasibility and complete the task, and requires high control input.
    \item Penalty-DDP also yields a safe solution but required a lot more iterations to find a solution and converge.
\end{compactitem}
Only DBaS-DDP is able to find the optimal solution for the problem. This is reflected in its comparatively lower cost as shown in \autoref{tab:robustness}. It also converges in much fewer iterations than penalty-DDP as shown in \autoref{tab:timing}. 
\begin{figure}[h]
    \centering
    \vspace{-5mm}
    \hspace*{-7mm}
    \subfloat{\includegraphics[trim=15 0 30 0, clip, width=0.5\linewidth]{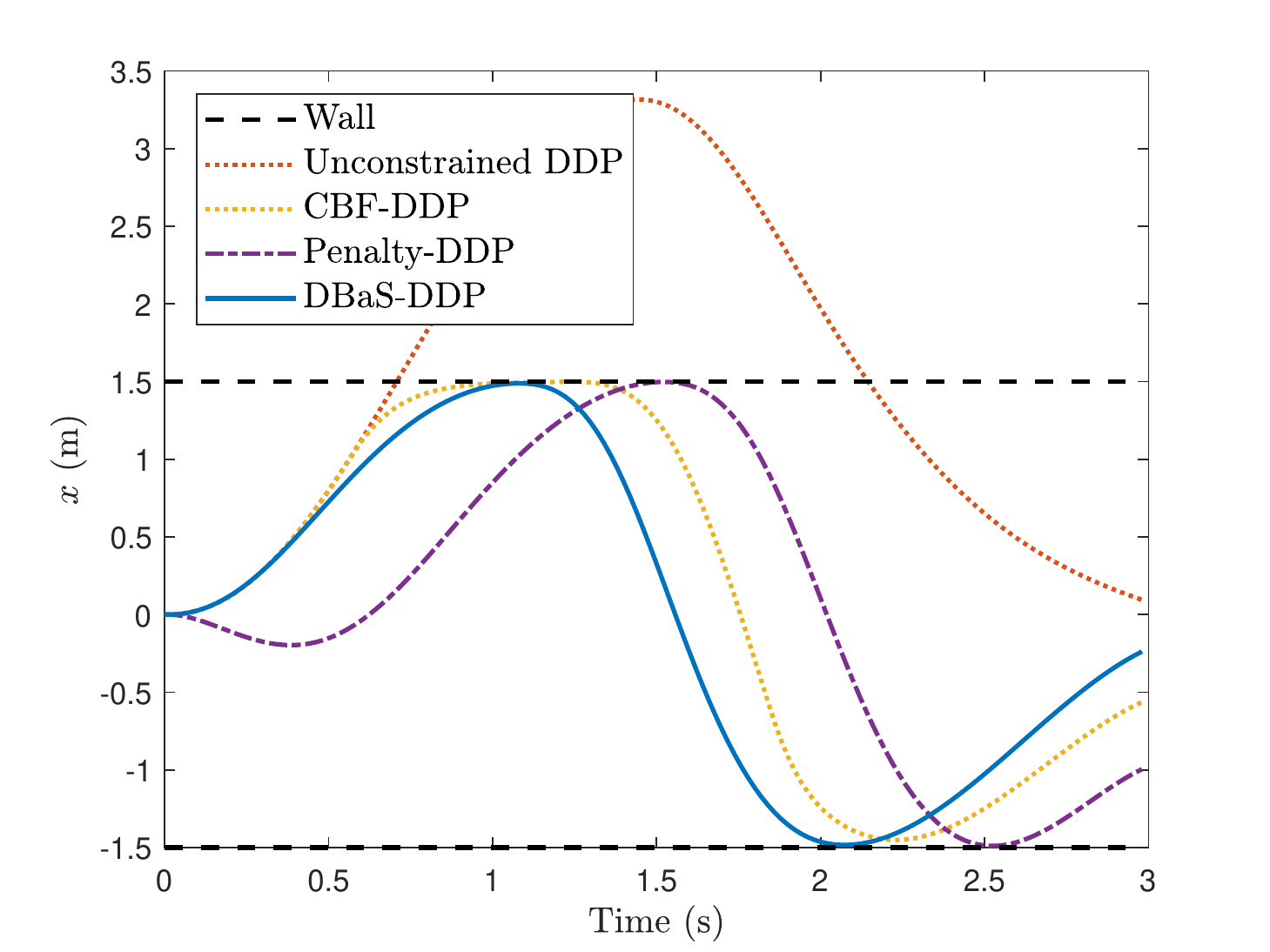}}
    \subfloat{\includegraphics[trim=15 0 30 0, clip, width=0.5\linewidth]{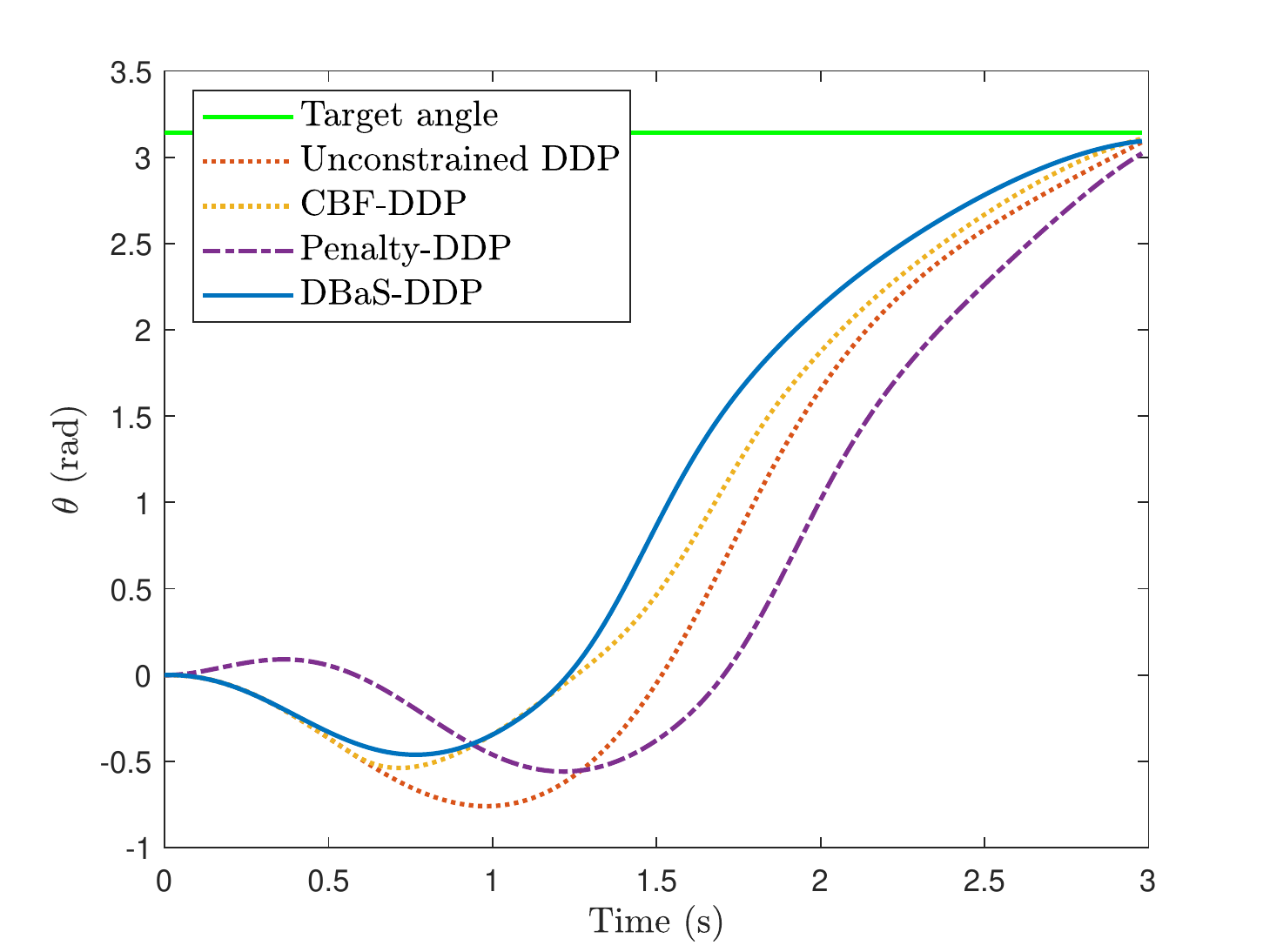}} \\
    \vspace{-4mm}
    \hspace*{-7mm}
    \subfloat{\includegraphics[trim=15 0 30 0, clip, width=0.5\linewidth]{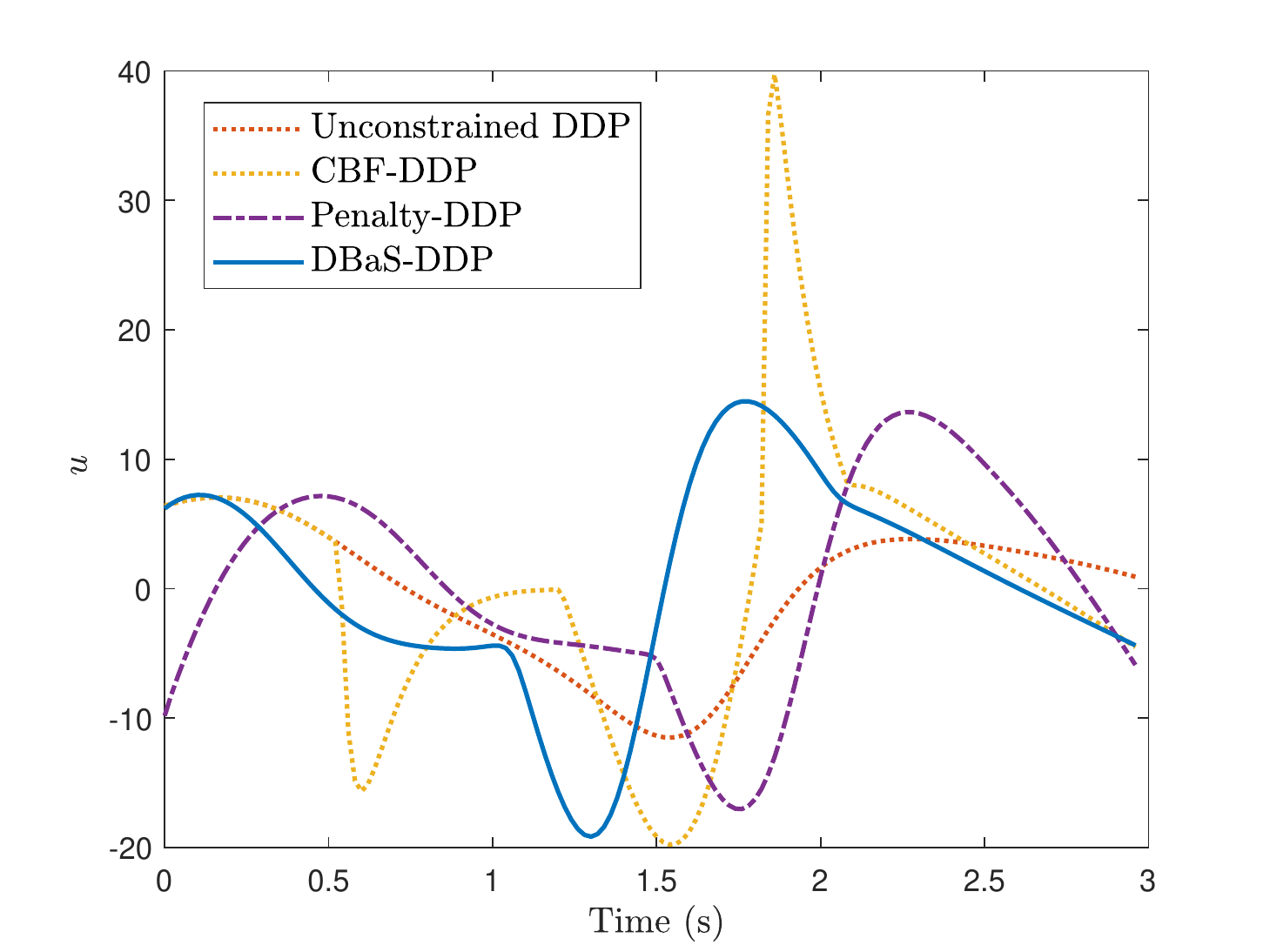}}
    \subfloat{\includegraphics[trim=15 0 30 0, clip, width=0.5\linewidth]{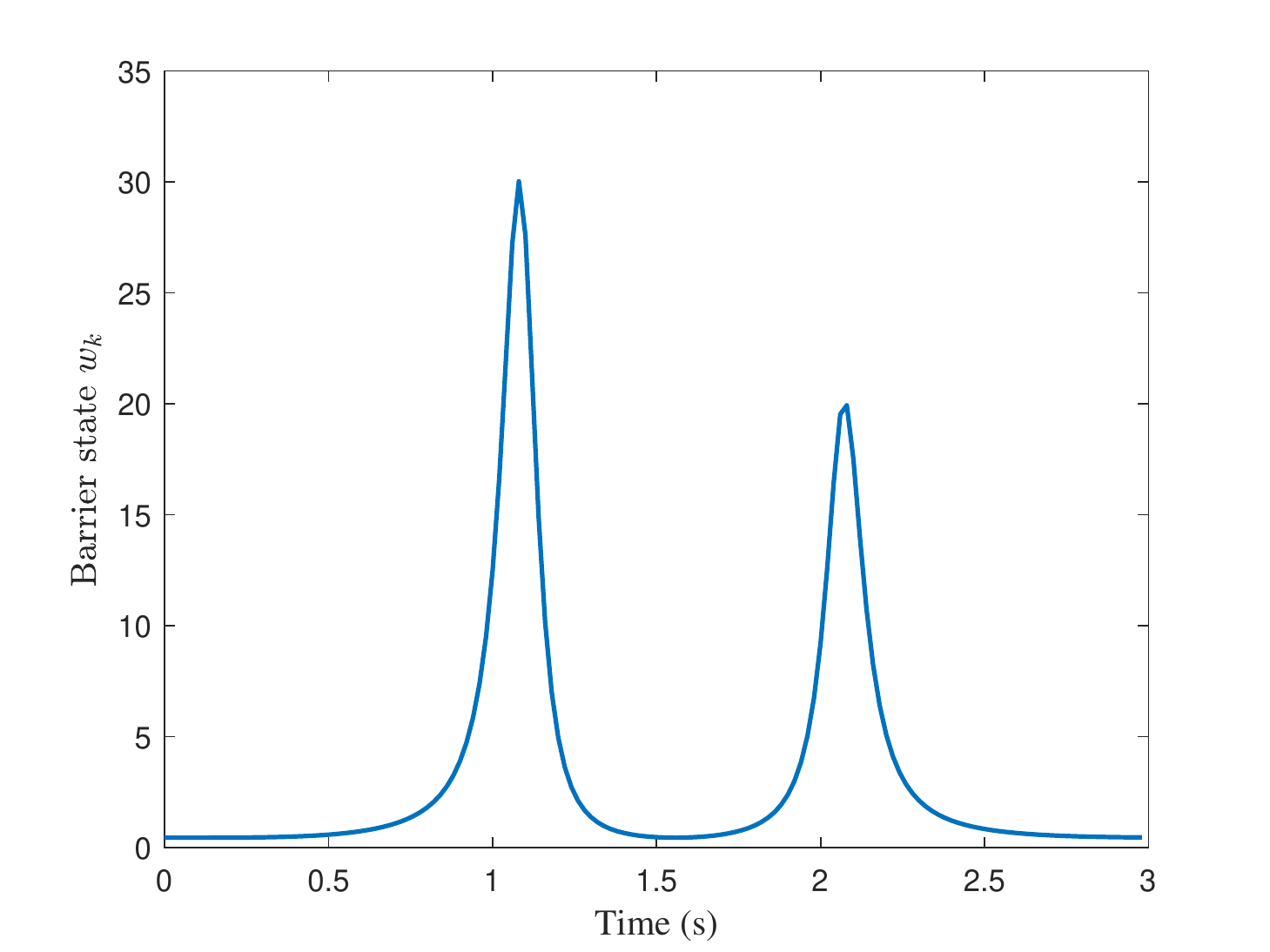}}
    \caption{Cart-pole swing up using unconstrained DDP (red), CBF-QP (yellow), penalty-DDP (purple) and DBaS-DDP (blue). DBaS-DDP respects the constraint on cart position (top left) while reaching the target angle (top right) with smooth control input (bottom left). The DBaS progression over time is shown in the bottom right sub-figure.}
    \label{fig:Cart pole}
    \vspace{-4mm}
\end{figure}

\subsection{Differential Wheeled Robot Safe Navigation}
The system dynamics are given by $\dot x = r\cos\theta(u_1 + u_2) / 2, \dot y = r\sin\theta(u_1 + u_2) / 2, \dot\theta = \frac{r}{2d}(u_1 - u_2)$,
\begin{comment}
\begin{align*}
    \begin{bmatrix}
    \dot{x} \\
    \dot{y} \\
    \dot{\theta} 
    \end{bmatrix}
     = \frac{1}{2}
     \begin{bmatrix}
    r u_1 \cos\theta + r u_2\cos\theta \\
    r u_1 \sin\theta + r u_2 \sin\theta \\
    \frac{r}{d} u_1 -\frac{r}{d} u_2
    \end{bmatrix}
%    \begin{bmatrix}
%    u_{1}\\u_{2}
%    \end{bmatrix}
\end{align*}
\end{comment}
where $x$ and $y$ are the robot's coordinates, $\theta$ is its heading, %$v$ is the translational speed and $w$ is the rotational speed of the robot. where 
$r=0.2$ is the wheel radius, $d=0.2$ is the wheelbase width, and $u_1$ and $u_2$ are the speeds of the right and left wheels respectively. The robot is to safely navigate different courses including randomly generated obstacle courses, a tight course and a course with different geometric shapes. We used the inverse barrier function and augmented the system's dynamics with a single DBaS with $q_w=10^{-3}$ and $S = 100I_{3\times 3}$.  

\autoref{fig:diffwh_reaching} shows that DBaS-DDP can handle both simple and complex obstacles with barriers as defined in \autoref{shapes equations table}, while the naive penalty approach often fails. We also consider randomized courses. The start and target points are drawn from a uniform distribution such that the positions are within a $0.5$ unit square around $(3,0)$ and $(-3,0)$, and with angles up to $\pm 0.5$ rad. Anywhere from 1 to 10 obstacles are created with locations drawn from a normal distribution $\mathcal{N}(0,1)$ and radii drawn from $\mathbf{U} [0,1]$. Success is achieved if the generated trajectory reaches within $0.1$ units of the target. We performed 1000 trials for each number of obstacles, with results listed by obstacle count in \autoref{fig:DW_success_rate} and summarized in \autoref{tab:robustness} which show the DBaS approach consistently and significantly outperforming the penalty method. Note that because the relative degree is ill-conditioned for this system, we cannot compare results with standard CBF-based methods.
\begin{figure} [htb]
    \centering
%    \vspace{-7.5mm}
     \hspace*{-0.5cm}\subfloat{\includegraphics[trim=0 5 20 20, clip, width=.5\linewidth]{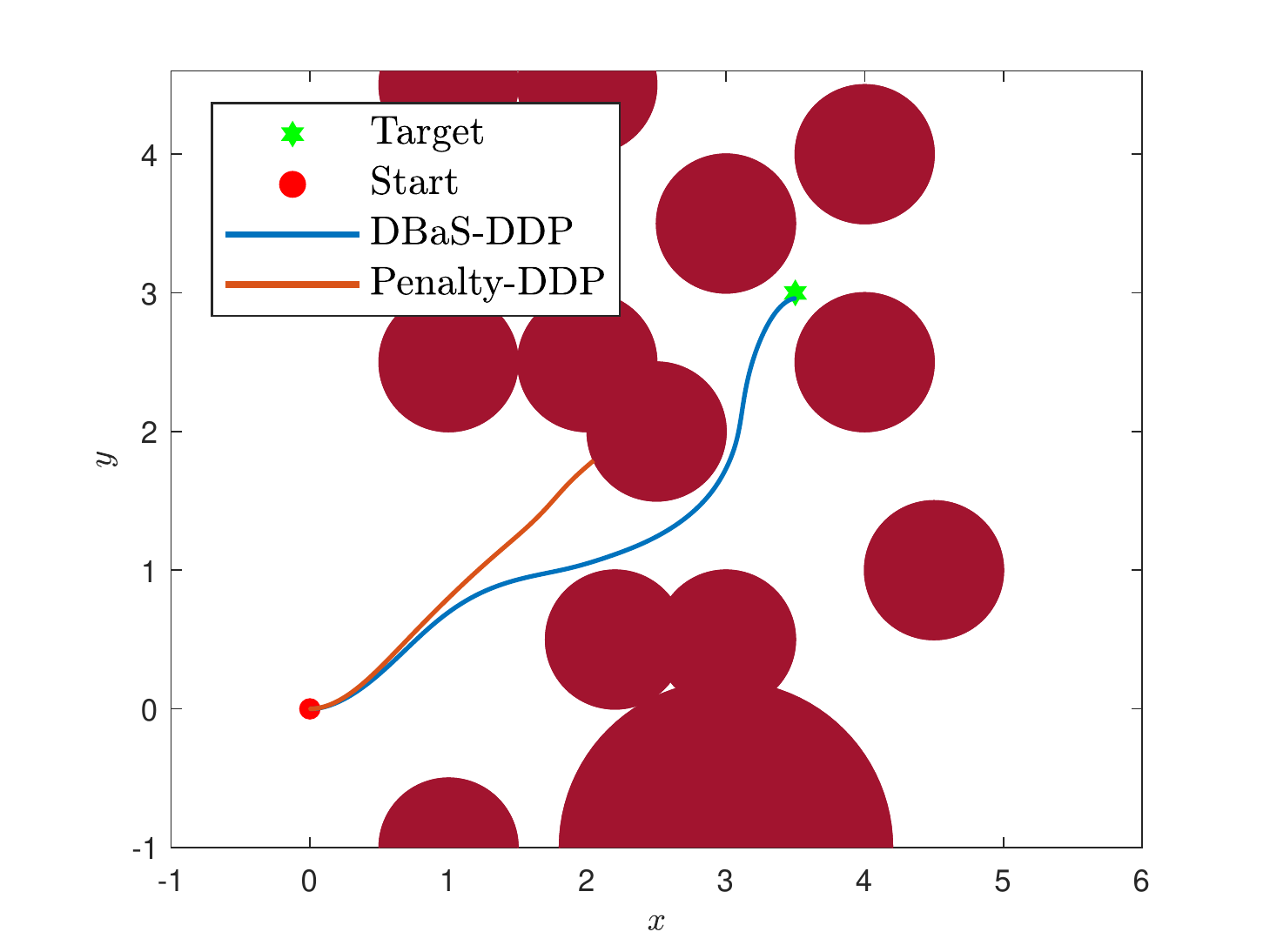}}
     \hspace*{-2mm}\subfloat{\includegraphics[trim=0 5 20 20, clip, width=.5\linewidth]{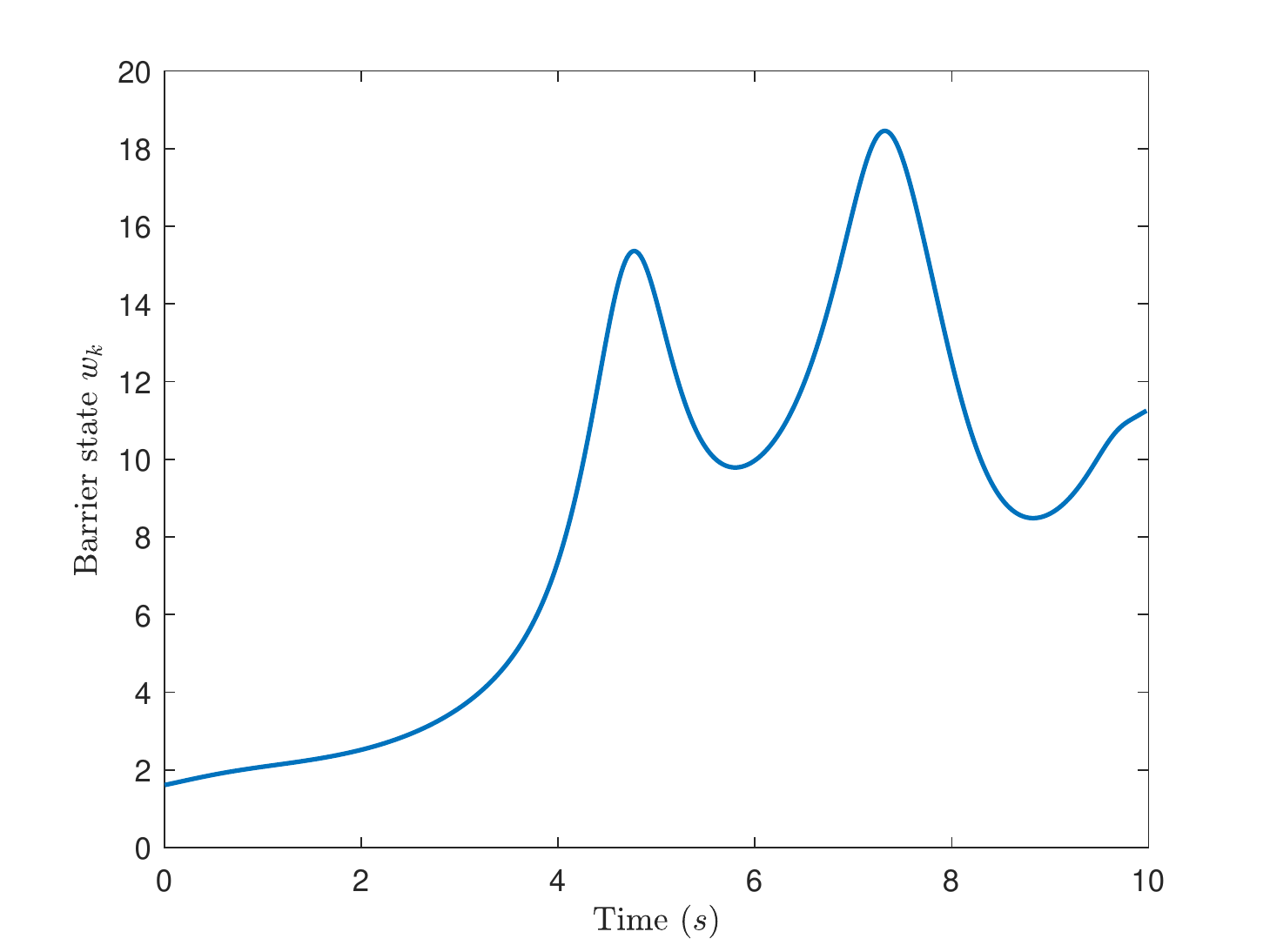}} 
     \\
     \hspace*{-0.5cm}\subfloat{\includegraphics[trim=0 5 20 20, clip, width=.5\linewidth]{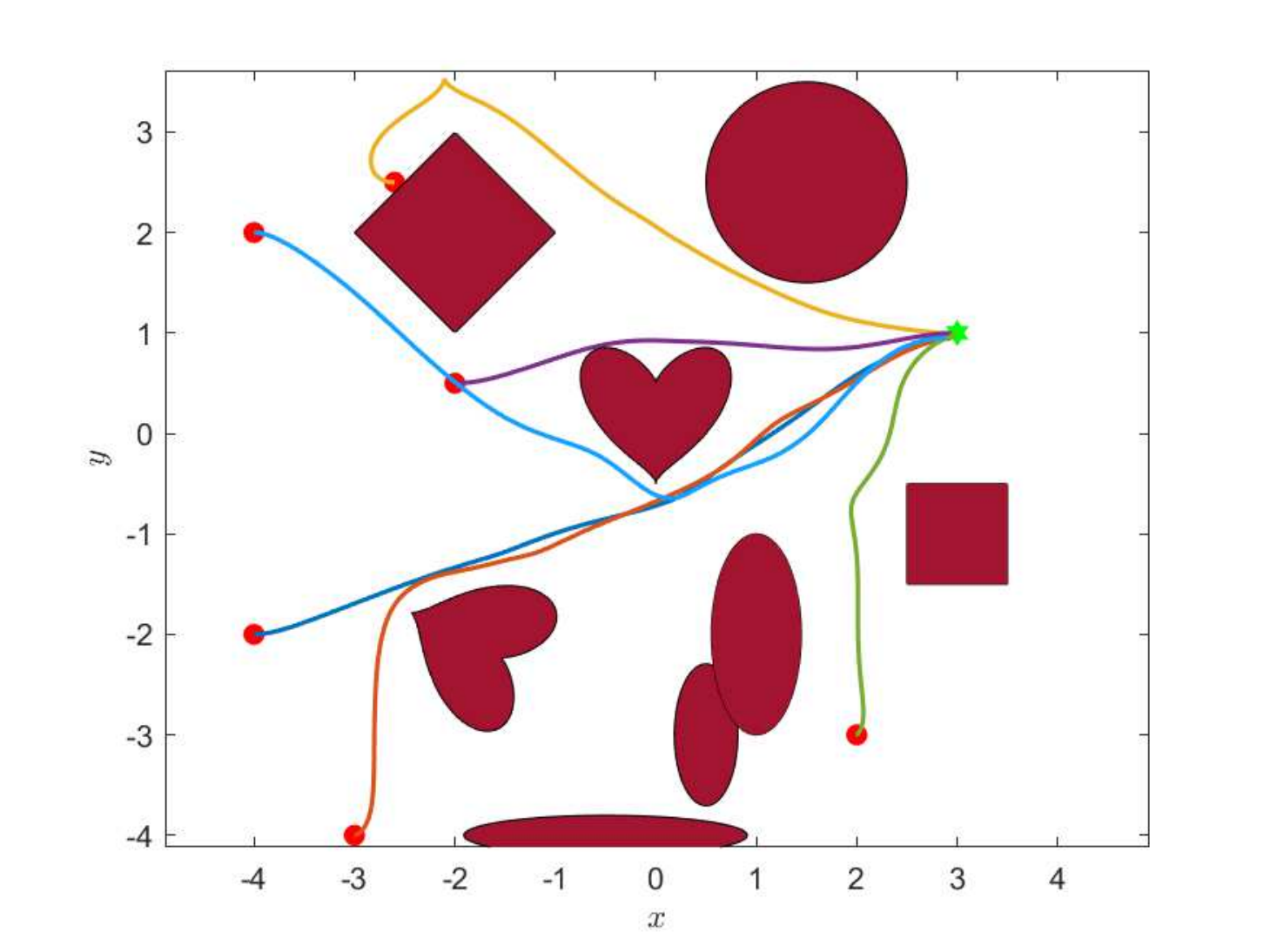}}
     \hspace*{-2mm}\subfloat{\includegraphics[trim=0 5 30 20, clip, width=.5\linewidth]{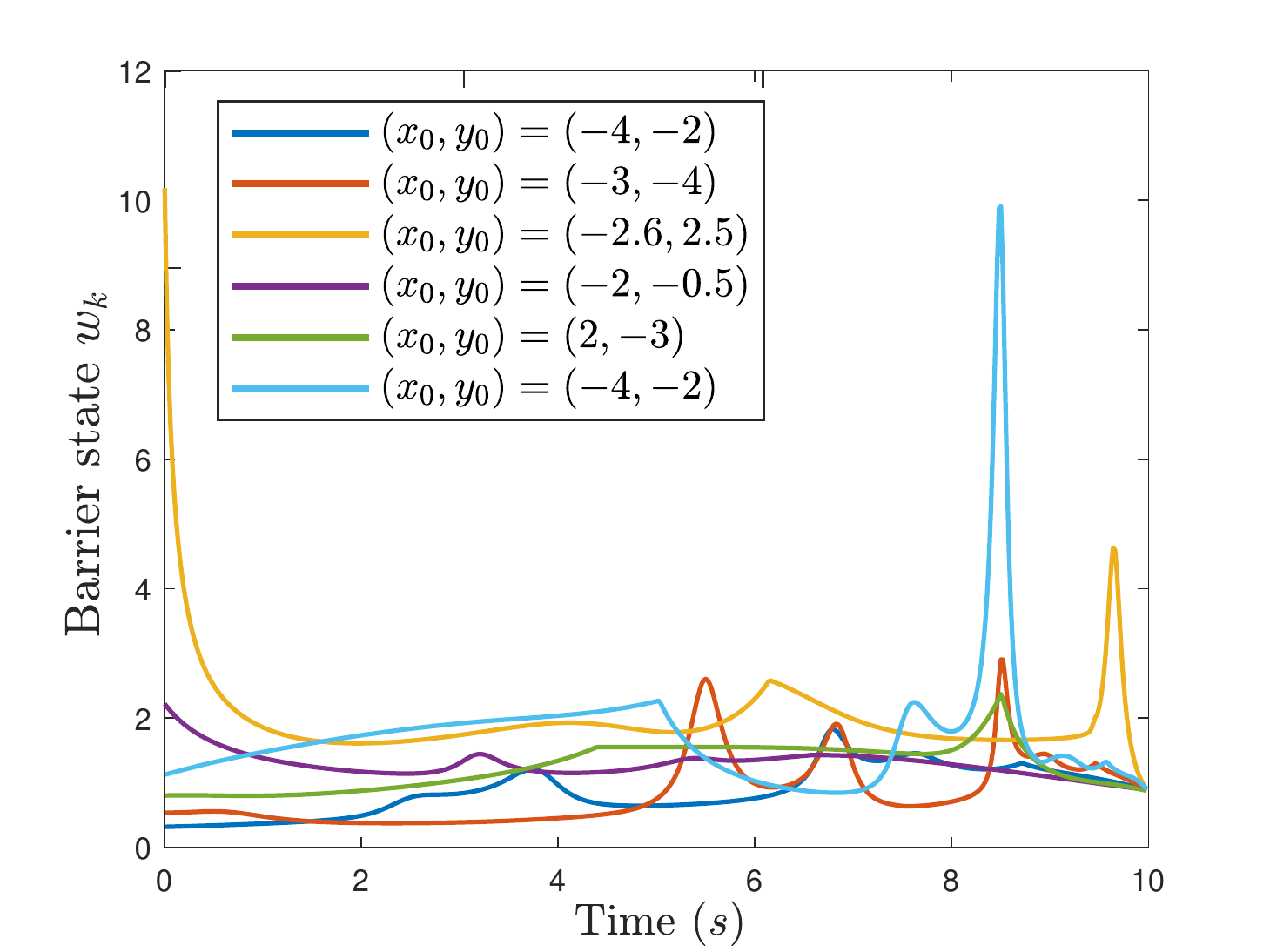}}
    \caption{Left: traces of DBaS-DDP and penalty-DDP (top) on differential wheeled robot, and several traces of DBaS-DDP with complex obstacles (bottom). Robots move from start (red) to goal (green). Right: progression of the associated barrier state over time in which larger values indicate that the robot is close to some obstacles.}
    \label{fig:diffwh_reaching}
     \vspace{-5mm}
\end{figure}

\begin{table}[htb]
    \caption{Equations used for complex obstacle shapes.}
    \centering
    \begin{tabular}{|c|c|}
    \hline
        Shape & Function \\
        \hline
        Ellipse & $a_x x^2 + a_y y^2 - r^2 $ \\
        Cardioid & $(a_x x^2+ a_y y^2-1)^3- a (a_x x)^2 ( a_y y)^3$ \\
        Diamond & $|x|+|y|$\\
        Square & $ |x+y|+|x-y| - 1$ \\
        \hline
    \end{tabular}
    \label{shapes equations table}
\end{table}

\begin{figure} [htb]
\vspace{-5mm}
    \centering
    \includegraphics[trim=0 5 40 0, clip, width=0.7\linewidth]{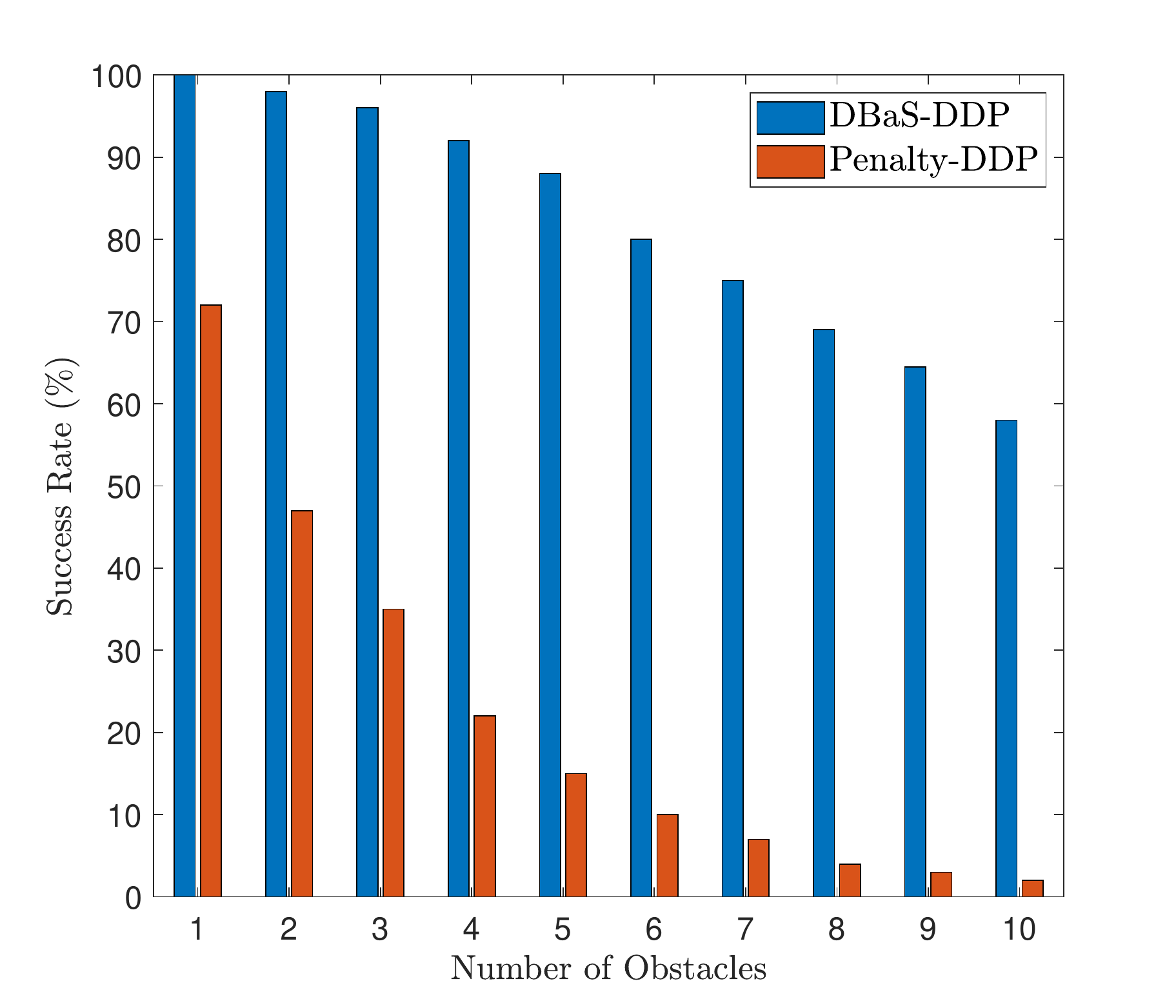}
    \caption{Success rates of DBaS-DDP (blue) and the penalty method (red) on the differential wheeled robot example in randomized obstacle courses with varying obstacle counts.}
    \label{fig:DW_success_rate}
    \vspace{-5mm}
\end{figure}

\subsection{Quadrotor Safe Reaching and Tracking}
\subsubsection{Reaching Task}
We applied the discrete barrier state based DDP (DBaS-DDP) to a quadrotor model as described in \citet{Sabatino2015QuadrotorCM} with unity parameters ($1$kg, $1\text{kg\,m}^2$, etc.). The quadrotor was to perform a \textit{reaching} problem safely, i.e. to fly from some initial state to some arbitrary final state in the presence of some obstacles without collision. The safe set is again defined as the complement of a set of spherical obstacles. A solution to the quadrotor reaching problem found by DBaS-DDP with randomly-generated obstacles is shown in \autoref{fig:quad_reaching}.

    \begin{figure}[h]
         \vspace{-5mm}
   \centering
     \subfloat{\includegraphics[trim=0 0 0 0, clip, width=0.53\linewidth]{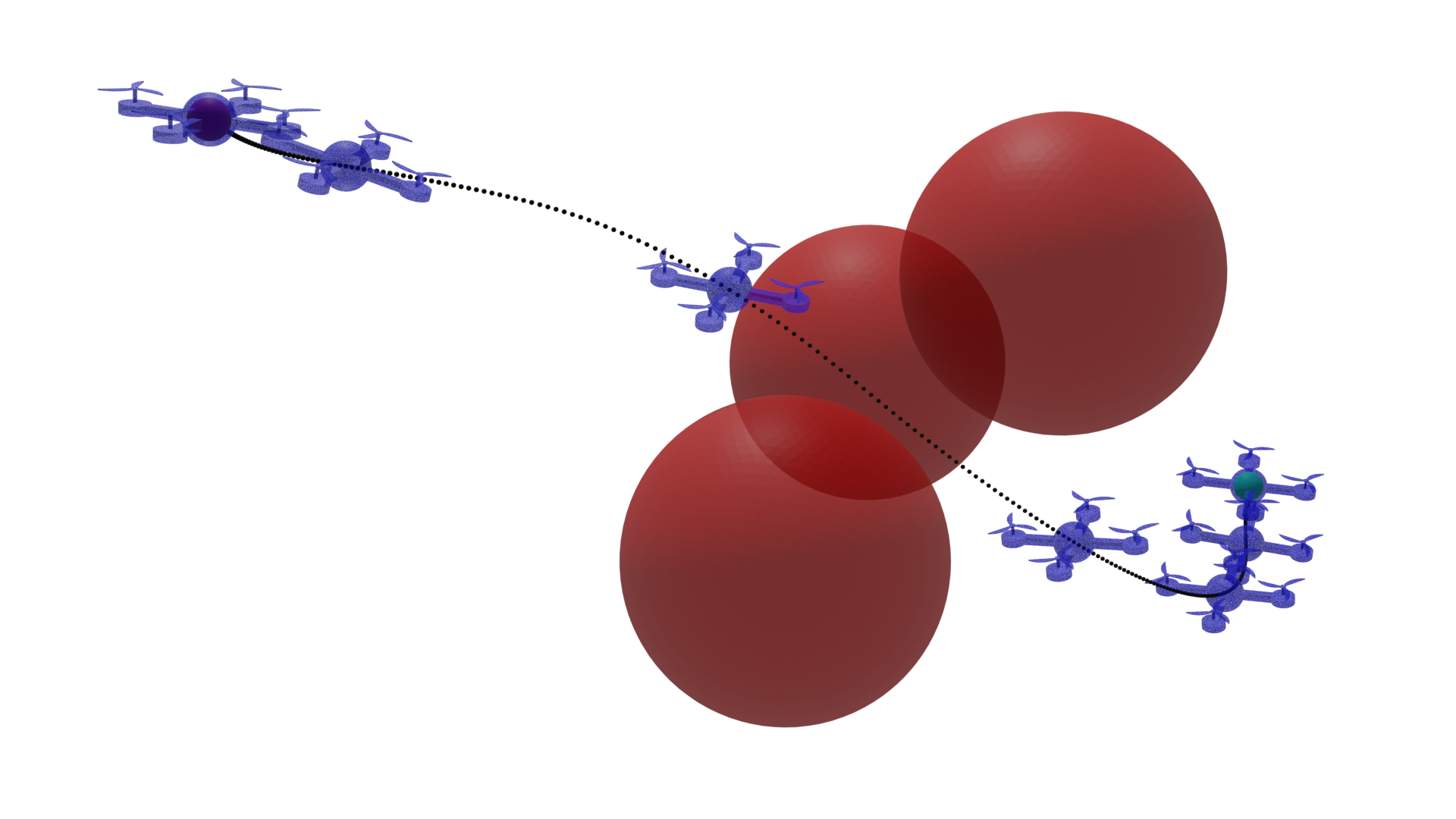}}
    \subfloat{\includegraphics[trim=0 0 0 0, clip, width=0.6\linewidth]{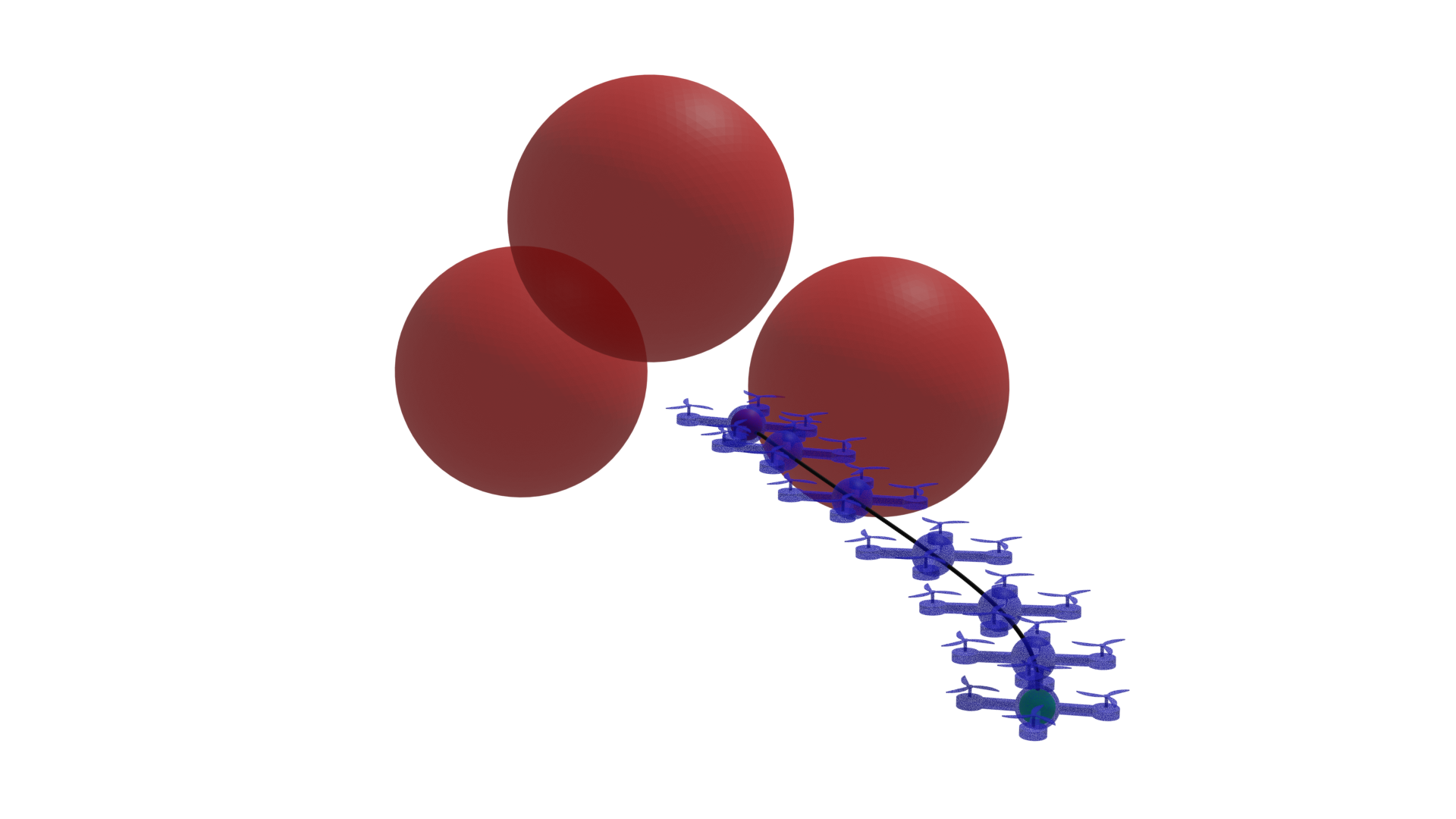}}
    \caption{Quadrotor reaching task with tight squeeze using DBaS-DDP (left) compared to the penalty method (right). Using the DBaS-DDP solver, the quadrotor was able to reach the goal (dark ball) safely while the penalty based DDP solver failed to navigate through the obstacles.}
      \label{reaching_tight}
   \end{figure}   

DBaS-DDP was compared against penalty-DDP in two fixed environments: a single-obstacle case where the quadrotor must navigate around a single large spherical obstacle and a three-obstacle case (shown in \autoref{reaching_tight}) designed to add a local minimum and a narrow passage between the obstacles. When testing in these fixed environments with random initial conditions, we see that DBaS-DDP successfully finds a path to the goal much more frequently than penalty-DDP as shown in \autoref{tab:robustness}. Note that \textit{failure} in this case indicates failure to reach the goal, rather than a failure to maintain safety: as described in \autoref{prop:safety}, the trajectory and controller found by DBaS-DDP is guaranteed to be safe in all cases.

DBaS-DDP was also tested in the presence of 40 randomized obstacles in an environment similar to \autoref{fig:quad_reaching}. In this case, DBaS-DDP reached the goal 96\% of the time while the penalty-based method only succeeded 59\% of the time and found substantially higher-cost trajectories.

While there exist CBF-based obstacle avoidance methods for quadrotors, they are tedious to construct for the full 3D model and comparison is out of the scope of this analysis.

\subsubsection{Tracking Task}
The technique of barrier states can also generalize to the \textit{tracking} problem, in which we want to safely track some (possibly unsafe) reference trajectory. To put the DBaS-DDP to the test, we attempted to track the trajectory defined by the parametric equations for a figure eight:
$$
\rm{x}(s) = \sin(2s), \rm{y}(s) = \cos(s), \rm{z}(s) = 0, s(t) = \frac{(\pi t / 25)^2}{\pi t / 25 + 1}
$$
Then, the squared deviation of the quadrotor's trajectory from this path was penalized in the cost function. In addition, we placed an obstacle at the origin forcing the quadrotor to navigate around the obstacle to remain safe. \autoref{fig:quad_tracking} shows an execution trace from this experiment. The quadrotor was able to successfully track the trajectory in a very aggressive maneuver without losing safety.

\begin{figure} [h]
    \centering
    \includegraphics[trim=200 100 200 100, clip, width=.66\linewidth]{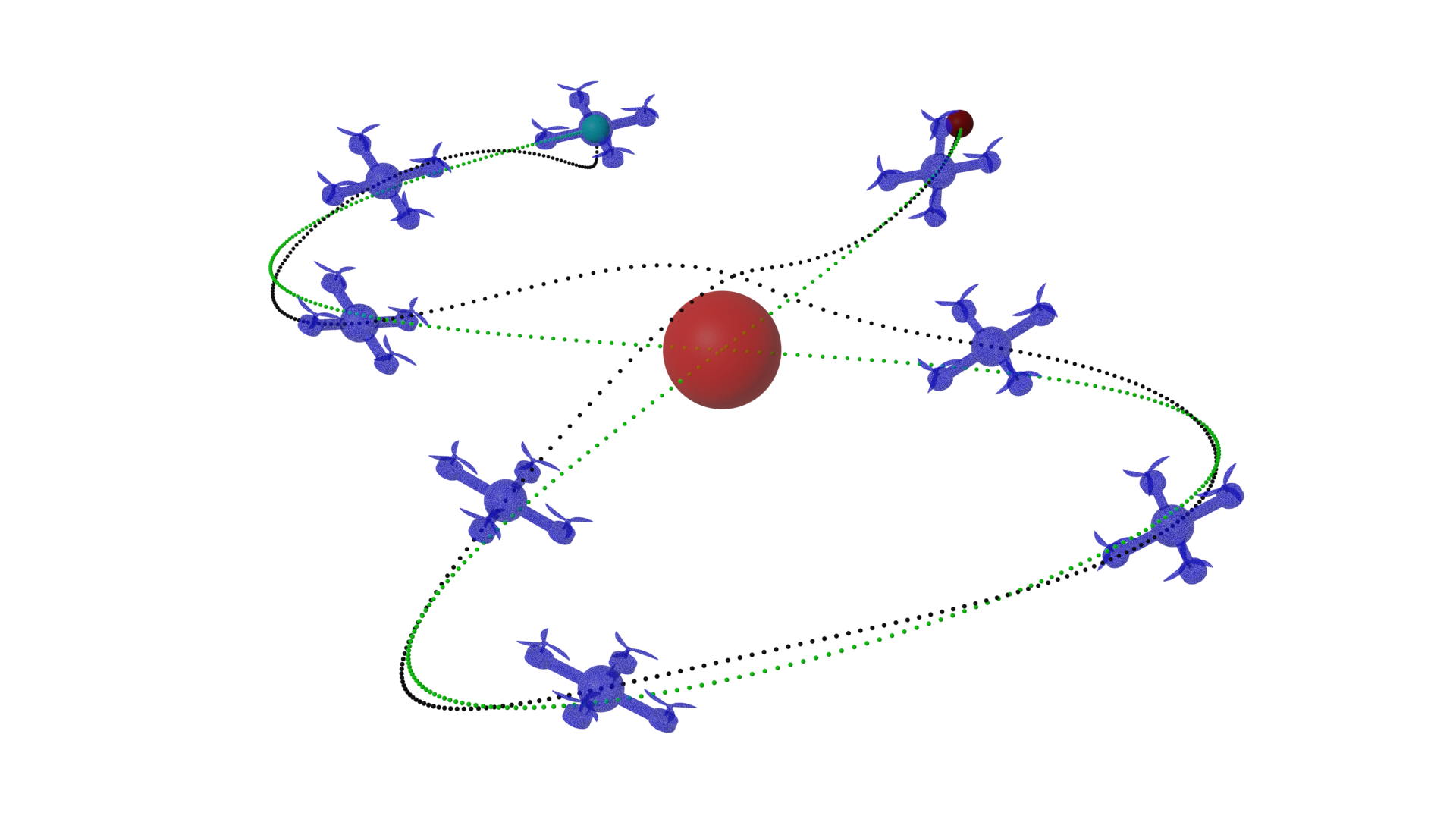}
    \caption{Quadrotor tracking a predetermined trajectory (green) while avoiding a spherical obstacle. Starting from the green ball, the safe trajectory from DBaS-DDP (black) successfully avoids the obstacle to reach the red ball.}
    \label{fig:quad_tracking}
    \vspace{-3mm}
\end{figure}

\section{Conclusion} \label{Section: Conclusion}
In this work, the newly proposed barrier state method for stabilization of continuous time systems was extended to trajectory optimization of discrete time systems. This extension, named the discrete barrier state (DBaS) method, provides provable safety guarantees when combined with DDP for safe trajectory optimization problems. %We also discussed the advantages and disadvantages enjoyed by the DBaS-DDP method.
To show the efficacy of the proposed safety embedded DDP, we presented several comparisons with other commonly used methods and successful simulation examples for a constrained cart-pole swing up, safe holonomic and non-holonomic robot navigations, and a quadrotor performing safety-critical planning and execution, demonstrating improvements in comparison to the other methods on each problem.

Our work requires perfect knowledge of the system's dynamics and, similarly, assumes full knowledge of state and safety constraints, which may not hold true in real-world applications where the former may require model identification or leaning and the latter are recovered from sensor measurements. Incorporating dynamics, state, and safety constraint uncertainty, for example using Gaussian Process regression, into the DBaS framework represents a promising direction for future research. Furthermore, future work will include improving robustness by extending the DBaS-DDP to min-max and risk-sensitive optimal control problems. Additionally, we are currently developing real-time implementations of DBaS-DDP in a lower-level language and plan to conduct physical experiments using a receding-horizon formulation.
    \vspace{-1mm}

\bibliographystyle{IEEEtranN}
\bibliography{IEEEabrv,references}

\vfill

\end{document}